\documentclass{article}

\usepackage{amsmath,amssymb,amsthm}
\usepackage{arxiv}

\usepackage[utf8]{inputenc} 
\usepackage[T1]{fontenc}    
\usepackage{hyperref}       
\usepackage{url}            
\usepackage{booktabs}       
\usepackage{amsfonts}       
\usepackage{nicefrac}       
\usepackage{microtype}      
\usepackage{graphicx}
\usepackage{bm}
\usepackage{dsfont}
\usepackage[inline]{enumitem}

\DeclareMathOperator*{\argmin}{{\rm argmin}}

\newcommand{\cH}{\mathcal{H}}        
\newcommand{\cG}{\mathcal{G}}        
\newcommand{\cHstar}{\mathcal{H}^*}  
\newcommand{\cD}{\mathcal{D}}        
\newcommand{\cP}{\mathcal{P}}

\newcommand{\cN}{\mathcal{N}}

\newcommand{\vx}{\mathbf{x}}
\newcommand{\vv}{\mathbf{v}}
\newcommand{\vy}{\mathbf{y}}
\newcommand{\vz}{\mathbf{z}}
\newcommand{\vb}{\mathbf{b}}
\newcommand{\vs}{\mathbf{s}}
\newcommand{\bs}{\mathbf{s}}
\newcommand{\Dx}{\Delta \mathbf{x}}
\newcommand{\Dt}{\Delta t}
\newcommand{\dx}{d\mathbf{x}}

\newcommand{\R}{\mathbb{R}}          
\newcommand{\E}{\mathbb{E}}          
\newcommand{\Hstar}{H^*}

\newcommand{\Cov}{\mathrm{Cov}}
\newcommand{\Var}{\mathrm{Var}}

\newcommand{\vW}{\mathbf{W}}
\newcommand{\bpsi}{\bar\psi}
\newcommand{\sqnorm}[1]{\left\| #1 \right\|^2}

\newcommand{\SDSM}{\Delta\mathcal S_{\rm DSM}}    
\newcommand{\Sideal}{\Delta\mathcal S_{\rm ideal}} 

\newtheorem{theorem}{Theorem}[section]
\newtheorem{lemma}[theorem]{Lemma}
\newtheorem{proposition}[theorem]{Proposition}

\numberwithin{equation}{section} 

\title{The Loss Floor of Denoising Score Matching:\\ Fisher Geometry from Schr\"odinger Bridges}

\author{
Avinash~Raju \\
Great Wall Motors Co. Ltd.\\
Beijing, China \\
\texttt{avinashraju777@gmail.com} \\
  \And
Kai~Zhang \\
China Patent Information Center\\
Beijing, China \\
\texttt{rachelzhang261@gmail.com} \\
}

\begin{document}
	
\maketitle

\begin{abstract}
Denoising score matching trains diffusion models by regressing onto a conditional score, although the generative dynamics ultimately require the marginal score. The two objectives share the same population minimizer, but the conditional target remains random at fixed noisy state and therefore introduces an irreducible excess in the training loss. We isolate this excess and show that, for a general corruption kernel under mild regularity assumptions, it is exactly the trace of the Fisher--Rao metric of the conditional endpoint family, integrated along the diffusion trajectory. The result provides an exact conditional-variance decomposition of the denoising objective and identifies the information geometry recently observed in diffusion latent spaces as an intrinsic component of the training loss rather than an additional structure imposed on the model.
We derive the result from the Schr\"odinger bridge variational principle, within which the ideal objective arises as excess path-space relative entropy. For corruption diffusions, the Fisher term is proportional to the rate at which the noisy state loses mutual information about the clean data, separating the loss floor into an information flow determined by the data and a weight determined by the corruption schedule and training objective. In the Gaussian case this yields a closed form for the floor, recovers the reparametrization invariance of the continuous-time objective, and relates its high-SNR divergence to the information dimension of the data.
Finally, we examine consequences for practice. Raw training losses obtained with different noise ranges or weightings contain different additive floors and need not rank models consistently; subtracting the floor restores the correct ordering in our example. We also contrast the second-order geometry seen by the training objective with the third-order conditional statistics that enter numerical sampling error. Together, these results connect the variational origin of diffusion dynamics, the geometry of latent space, and the information-theoretic structure of denoising score matching.
\end{abstract}

\section{Introduction}
\label{sec:intro}

Denoising diffusion models \cite{Sohl-DicksteinW15, Ho2020DDPM, Song2020DDIM, Song2021ScoreBased, nichol2021improved} are now a standard framework for generative modeling, with applications spanning image synthesis \cite{Rombach2022LDM, Dhariwal2021, Karras2022ElucidatingTD}, video and audio generation \cite{Ho2022VideoDM, Blattmann2023AlignYL, Kong2020DiffWaveAV, Popov2021GradTTSAD}, molecular design \cite{Watson2023RFdiffusion, Zeni2025MatterGen}, and discrete domains such as language \cite{Austin2021D3PM, Lou2024SEDD, Nie2025LLaDA}. Their formulations are by now well understood from several equivalent viewpoints: time-reversed stochastic differential equations \cite{Anderson1982, Haussman86, Song2021ScoreBased}, variational objectives \cite{kingma2021variational, Huang21, Song2021Max}, flow matching and stochastic interpolants \cite{Lipman2022FlowMF, Liu2022FlowSA, Albergo2023StochasticIA}, and posterior denoising through Tweedie-type identities \cite{Robbins1992, Efron2011TweediesFA, Manor2023OnTP}. Common to all of these viewpoints is that training proceeds by denoising score matching (DSM) \cite{Hyvarinin2005, Vincent2011, Song2019}.

The generative dynamics, however, require the \emph{marginal} score $\vs(\vx,t)=\nabla_{\vx}\log P_t(\vx)$ of the noisy data distribution, which is intractable. DSM circumvents this by regressing onto the \emph{conditional} score $\vs_{\rm cond}(\vx,t;\vy) = \nabla_{\vx}\log q(\vx,t\mid\vy)$ of the corruption kernel for a clean sample $\vy$ \cite{Ho2020DDPM, Vincent2011, Song2019}. The conditional score is an unbiased estimator of the marginal score, so the two objectives share the same population minimizer; but at fixed $\vx$ the conditional target remains a random variable, and this sample-level stochasticity has a cost that the standard derivation leaves unquantified.

That cost is the subject of this work. Because sharing a minimizer does not imply sharing a loss value, replacing the marginal target by a random conditional target inflates the objective by a term that no model can reduce, and the question we answer is what this irreducible excess is and what it depends on.

Our main result is that the excess admits an exact geometric characterization. The fluctuation of the conditional score about its posterior mean is the score of the conditional endpoint distribution
\[
p(\vy\mid\vx,t)
=
\frac{P_{\rm data}(\vy)\,q(\vx,t\mid\vy)}
{P_t(\vx)}
\]
with respect to the latent coordinate $\vx$, so that its second moment is by definition the Fisher information of this family, the central object of information geometry \cite{BarndorffNielsen1978, amari2000methods, Nielsen2020}. This yields an exact orthogonal decomposition of the denoising objective,
\begin{equation}
	\SDSM
	=
	\Sideal
	+
	\frac12\int_0^T
	\gamma(t)\,
	\E_{\vx\sim P_t}
	\!\left[\operatorname{tr}g(\vx,t)\right]dt,
	\label{eq:intro_decomposition}
\end{equation}
where $\Sideal$ is the ideal marginal-score objective and $g(\vx,t)$ is the Fisher--Rao metric tensor of the conditional endpoint family. The second term depends only on the corruption and the data, never on the model: it is an irreducible loss floor intrinsic to conditional denoising score matching.

The Schrödinger bridge formulation \cite{schrodinger1931, schrodinger1932, Follmer1988, Csiszar1975, Leonard2014} provides a variational foundation for the reverse diffusion dynamics and supplies the common origin of both terms. A Schr\"odinger bridge is the relative-entropy projection of a reference path measure onto prescribed endpoint constraints, and its optimal dynamics are given by a Doob $h$-transform \cite{Doob1957}, in which the reference kernel is tilted by a positive harmonic function that enforces the terminal marginal; bridges and their stochastic-control interpretation have by now been used extensively in generative modeling \cite{Bortoli21, Wang21, Vargas21, ChenLiuTheodorou2022, Tong2023SimulationfreeSB, Shi2023DSBM}. Here the same variational structure plays a different role: the \emph{first} variation of the bridge functional produces the score-based generative drift and hence the ideal objective $\Sideal$, while the \emph{second} variation produces the quadratic form whose pullback to latent space is the Fisher metric appearing in the floor. For affine Gaussian corruption, $q(\vx,t\mid\vy) = \mathcal N(\vx;\alpha_t\vy,\sigma_t^2 I)$, this metric reduces to
\begin{equation}
	g_{ij}(\vx,t)
	=
	\frac{1}{\sigma_t^2}\delta_{ij}
	+
	\partial_i\partial_j\log P_t(\vx),
	\label{eq:intro_gaussian_metric}
\end{equation}
which is precisely the Hessian/Fisher geometry recently identified in diffusion latent spaces \cite{karczewski2026spacetime, lobashev2025hessian}. Equation~\eqref{eq:intro_decomposition} thus provides a variational origin for that geometry: it is not an additional structure imposed on the model, but the variance of the conditional regression target already present in the training loss.

The floor also admits an information-theoretic evaluation. For a general corruption diffusion $d\vx_t=f(\vx_t,t)\,dt+\sqrt{D(t)}\,d\vW_t$,
\begin{equation}
	\E_{P_t}\!\left[\operatorname{tr}g(\vx,t)\right]
	=
	-\frac{2}{D(t)}
	\frac{d}{dt}I(\vy;\vx_t),
	\label{eq:intro_infoflow}
\end{equation}
so the floor separates into a data-dependent information flow and a schedule-dependent weight; when the loss weighting coincides with the diffusion coefficient, the integral telescopes to a difference of mutual informations, and in the Gaussian case to a difference of differential entropies through the I--MMSE relation \cite{GuoShamaiVerdu2005}.

These identities have direct practical consequences. Because the additive floor varies with the SNR range, schedule, and loss weighting, raw training losses are not comparable across training configurations; we exhibit a ranking inversion that floor subtraction repairs. The factorization further separates what schedule design can and cannot affect, complementing recent work on information-based scheduling and noise allocation \cite{kingma2021variational, Stancevic2025Entropic, Kong2026Schedule, CosineFisherRao2025, Sabour2024AYS, Williams2024ScoreOptimal, Ambrogioni2026Atoms}. Finally, the training objective probes only the second cumulant of the conditional endpoint law, whereas numerical integration of the generative dynamics also probes the third; we use this contrast as a diagnostic for where third-order geometric effects become significant.

The paper is organized around the three identities displayed above: the loss floor is the integrated trace of a Fisher--Rao metric (Eq.~\eqref{eq:intro_decomposition}); that trace is the rate of mutual-information loss along the corruption (Eq.~\eqref{eq:intro_infoflow}); and for Gaussian corruption the metric and the floor admit the closed forms that underlie the geometry of Eq.~\eqref{eq:intro_gaussian_metric}. Section~\ref{sec:bridge} derives the ideal objective from the Schr\"odinger bridge variational principle and reduces it to DSM. Section~\ref{sec:fisher} constructs the Fisher geometry of the bridge and proves the decomposition theorem. Section~\ref{sec:infoflow} evaluates the floor in information-theoretic terms. Section~\ref{sec:consequences} develops the consequences for training and sampling, and Section~\ref{sec:discussion} discusses scope, extensions, and limitations. Rather than introducing a new diffusion objective or scheduling algorithm, this work exposes an exact decomposition already implicit in the standard objective and develops the geometric and information-theoretic structure it entails.

\section{From Path-Space Entropy to Denoising Score Matching}
\label{sec:bridge}

This section derives the training objective of diffusion models from the Schr\"odinger bridge variational principle. Our presentation proceeds in four steps: 
\begin{enumerate*}[label=(\roman*)]
\item the bridge problem selects an optimal path measure by relative-entropy minimization and its optimality conditions take the form of a Doob $h$-transform
\item the excess path entropy relative to the optimum reduces to a quadratic score objective,
\item and a conditional decomposition of the score field reduces that objective to denoising score matching.
\end{enumerate*}
The derivation below is included primarily to fix the variational origin of the score-matching objective. Its role is modest but useful as it provides a direct route from path-space entropy to the marginal-score objective against which denoising score matching will later be compared. The problem of reverse diffusion was studied from an action principle perspective in \cite{Premkumar2023GenerativeDF} whose notations we borrow.

\subsection{The path-space variational principle}
\label{subsec:variational}

Our goal is to transform an initial distribution $P_0=P_{\rm prior}$ into a target distribution $P_T=P_{\rm data}$ through a stochastic process. Let $\cG$ denote a reference path measure, assumed Markovian and governed by killed forward and backward Kolmogorov equations in its final and initial arguments, respectively. The Schr\"odinger bridge problem \cite{schrodinger1931, schrodinger1932, Leonard2014} arises from a large-deviation question: among the empirical path measures generated by many independent realizations of $\cG$, which one dominates when one conditions on the endpoint marginals being $P_0$ and $P_T$?

By Sanov's theorem, which assigns to the empirical measure of independent samples a large-deviation rate given by relative entropy \cite{DemboZeitouni1998, csiszar1984information}, the empirical path measure $\cH$ asymptotically carries the large-deviation weight
\[ \mathbb P(\cH_n\simeq\cH) \asymp \exp\left[-nD_{\rm KL}^{\rm path}(\cH\,\|\,\cG)\right], \]
so that conditioning on the endpoint constraints restricts the admissible measures to $\mathcal C=\{\cH:H_0=P_0,\;H_T=P_T\}$ and the dominant contribution is the constrained minimum of the rate function,
\begin{equation} \label{eq:SB_problem}
\begin{split}
	\cHstar &= \argmin_{\cH:\, H_0=P_0,\;H_T=P_T}  D^{\rm path}_{KL}(\cH \| \cG),\\
	D^{\rm path}_{KL}(\cH \| \cG) &= \int d\vx_{0:N} \, \cH(\vx_{0:N}) \ln \frac{\cH(\vx_{0:N})}{\cG(\vx_{0:N})}.
\end{split}
\end{equation}
Here the interval $[0,T]$ is discretized into $N\gg1$ segments of length $\Dt=T/N$, and the reference measure factorizes as
\begin{equation}
	\cG(\vx_{0:N}) = P_0(\vx_0) \prod_{k=0}^{N-1} G(\vx_{k+1} \mid \vx_k).
\end{equation}
Note that $\cG$ is required neither to be a probability measure nor to map $P_0$ to $P_T$, since the endpoint constraints are imposed variationally rather than by construction of the reference.

Solving Eq.~\eqref{eq:SB_problem} by Lagrange multipliers (Appendix~\ref{app:schrodinger_bridge_variation}) yields the factorized optimum
\begin{equation} \label{eq:sb_factorized_sol}
	\cHstar(\vx_{0:N}) = f(\vx_0)\,\cG(\vx_{0:N})\,g(\vx_N),
\end{equation}
where $f$ and $g$ depend only on the initial and final endpoints. This is precisely the structure of a Doob $h$-transform \cite{Doob1957}, obtained here as a direct consequence of the variational principle: the optimal path measure is the reference diffusion biased at its endpoints by factors that enforce the marginal constraints. Since $\cHstar$ is itself Markovian, so that
\begin{equation}
	\cH(\vx_{0:N}) = P_0(\vx_0) \prod_{k=0}^{N-1} H(\vx_{k+1} \mid \vx_k),
\end{equation}
the path-space divergence in Eq.~\eqref{eq:SB_problem} decomposes into a sum of per-step divergences between transition kernels, a fact we exploit below.

\subsection{Optimal bridge dynamics}
\label{subsec:doob}

The bridge dynamics at intermediate times are most transparently expressed through two auxiliary fields: the backward field $\psi(\vx,t)$ propagates the terminal constraint backward in time, while the forward field $\bpsi(\vx,t)$ propagates the initial condition forward. A direct consequence of the factorized form Eq.~\eqref{eq:sb_factorized_sol} is that the intermediate marginal is their product,
\begin{equation} \label{eq:product_form}
	P(\vx, t) = \psi(\vx, t) \, \bpsi(\vx, t),
\end{equation}
so the density at each point of spacetime is the confluence of information flowing from the past and the future. In the continuum limit the fields obey a dual pair of linear equations (Appendix~\ref{app:schrodinger_bridge_variation}), namely the adjoint (backward) Kolmogorov equation
\begin{equation}
	\partial_t \psi + \vv \cdot \nabla \psi + \frac{\gamma(t)}{2} \nabla^2 \psi - V_G \psi = 0,
	\label{eq:psi-eom}
\end{equation}
and the forward Fokker--Planck equation
\begin{equation}
	\partial_t \bpsi + \nabla \cdot (\vv \bpsi) - \frac{\gamma(t)}{2} \nabla^2 \bpsi + V_G \bpsi = 0,
	\label{eq:bpsi-eom}
\end{equation}
where $\vv$ is the drift of the reference process, $\gamma(t)$ its diffusion coefficient, and $V_G(\vx,t)$ a possible killing potential.

The optimal transition kernel is the reference kernel tilted by the ratio of backward fields,
\begin{equation} \label{eq:tilted_kernel}
	\Hstar(\vx_{k+1} \mid \vx_k) = \frac{\psi_{k+1}(\vx_{k+1})}{\psi_k(\vx_k)}\,G(\vx_{k+1} \mid \vx_k),
\end{equation}
so that the bridge kernel is a local Doob tilt of the reference kernel. Its short-time expansion (Appendix~\ref{app:schrodinger_bridge_variation}), which requires retaining second-order terms in $\Dx$ because $\Dx\sim\sqrt{\Dt}$ under the reference measure, preserves the diffusion coefficient and shifts the drift:
\begin{equation}
	\vv_{*}(\vx, t) = \vv(\vx, t) + \gamma(t)\, \nabla \ln \psi(\vx, t).
	\label{eq:optimal_drift}
\end{equation}
The correction $\gamma\nabla\ln\psi$ is the control field by which the bridge steers the reference dynamics toward the terminal constraint.

A particularly tractable sector of this construction underlies score-based generative modeling. If the reference process satisfies $V_G = -\nabla\cdot\vv$, the forward equation admits the trivial solution $\bpsi = {\rm const.}$, which we set to unity using the scaling symmetry of Eq.~\eqref{eq:sb_factorized_sol}; the entire burden of the terminal constraint is then carried by the backward field $\psi$, and $P_t=\psi_t$ by Eq.~\eqref{eq:product_form}, so that the optimal drift Eq.~\eqref{eq:optimal_drift} reduces to the score-matching drift
\begin{equation}
	\vv_{*}(\vx, t) = \vv(\vx, t) + \gamma(t)\, \nabla \ln P_t(\vx).
	\label{eq:score_drift}
\end{equation}
The learned score is therefore not an arbitrary regression target but the optimal control field required to satisfy the terminal data constraint.\footnote{Here $t$ parameterizes the generative direction, so the sign convention of Eq.~\eqref{eq:optimal_drift} differs from conventional reverse-time SDE notation; the two are related by the corresponding change of orientation.}

\subsection{Excess path entropy and the ideal score objective}
\label{subsec:excess}

We now turn to learning the score $\vs=\nabla\ln P_t$ from a finite dataset $\cD=\{\vy_i\}_{i=1}^N\sim P_{\rm data}$. The optimum $\cHstar$ is the $I$-projection of $\cG$ onto the constraint set \cite{Csiszar1975, csiszar1984information}, for which relative entropy obeys the Pythagorean identity
\begin{equation} \label{eq:pythagorean_identity}
	D^{\text{path}}_{KL}(\cH \| \cG) = D^{\text{path}}_{KL}(\cHstar \| \cG) + D^{\text{path}}_{KL}(\cH \| \cHstar)
\end{equation}
for any feasible $\cH$ sharing the endpoint marginals of $\cHstar$. Since the first term on the right is independent of $\cH$, learning the bridge is equivalent to minimizing the excess action
\begin{equation}
	\Delta \mathcal S \;\equiv\; D_{KL}^{\text{path}}(\cH \| \cG) - D_{KL}^{\text{path}}(\cHstar \| \cG) \;=\; D_{KL}^{\text{path}}(\cH \| \cHstar) \;\geq\; 0.
	\label{eq:excess_action}
\end{equation}
For Markovian path measures with a common initial distribution, this path-space divergence decomposes into per-step kernel divergences, each of which is quadratic in the drift difference to leading order in $\Dt$ (Appendix~\ref{app:short-time-expansion}), and summing over time slices gives
\begin{equation}
	\Delta \mathcal S = \frac{1}{2} \int_0^T \frac{dt}{\gamma(t)} \, \E_{\vx \sim P_t^H} \left[ \| \vv_H(\vx, t) - \vv_*(\vx, t) \|^2 \right],
	\label{eq:excess_action_integral}
\end{equation}
where the expectation is taken under $P_t^H$, the marginal of the \emph{model's own} dynamics. This on-policy weighting is computationally prohibitive, since it requires simulating the generative process during training. However, at the level of unrestricted drift fields the integrand is point-wise minimized by $\vv_H=\vv_*$, so the functional minimizer is independent of the positive weighting measure, and we therefore replace $P_t^H$ by the reference noising marginal $P_t$, which is available without simulation, obtaining the simulation-free objective \cite{Tong2023SimulationfreeSB}
\begin{equation}
	\Delta \mathcal S_{\text{SM}} = \frac{1}{2} \int_0^T \frac{dt}{\gamma(t)} \, \E_{\vx \sim P_t} \left[ \| \vv_H(\vx, t) - \vv_*(\vx, t) \|^2 \right].
	\label{eq:score_matching_objective_continuous}
\end{equation}
For a restricted parametric model this replacement does affect the projection, a point we return to in Sec.~\ref{sec:discussion}.

To expose the quantity being learned, we parameterize the candidate drift as a control deformation of the reference,
\begin{equation}
	\vv_H(\vx,t) = \vv(\vx,t)+\gamma(t)\,\mathbf{s}_\theta(\vx,t),
\end{equation}
mirroring the structure of the optimal drift Eq.~\eqref{eq:optimal_drift}. The reference drift then cancels from the difference $\vv_H-\vv_*$---the step by which the reference dynamics disappears from the objective---leaving the ideal score-matching loss
\begin{equation} \label{eq:score_function_learning_obj}
	\Sideal \;=\; \frac12 \int_0^T dt\,\gamma(t)\, \E_{\vx\sim P_t} \left[ \sqnorm{\mathbf{s}_\theta(\vx,t) - \nabla_\vx\log P_t(\vx) } \right],
\end{equation}
where we have used $P_t=\psi_t$ in the score-matching sector. Equation~\eqref{eq:score_function_learning_obj} is \emph{ideal} in the sense that its regression target, the marginal score, is not available in closed form.

\subsection{Denoising score matching as conditional regression}
\label{subsec:dsm}

The ideal objective depends on the marginal score, whose evaluation requires the global field $\psi$. Because $\psi$ obeys a \emph{linear} equation, the principle of superposition allows us to define a conditional field $\psi^{(\vy)}(\vx,t)$ corresponding to a bridge pinned to the Dirac terminal constraint $g(\vx_T)=\delta^d(\vx_T-\vy)$. The global field is then the continuous superposition of these conditional fields over the data distribution,
\begin{equation}
\psi(\vx, t) = \int d\vy \, P_{\rm data}(\vy) \, \psi^{(\vy)}(\vx, t) = \E_{\vy \sim P_{\rm data}}\!\left[\psi^{(\vy)}(\vx, t)\right] = P(\vx, t).
\end{equation}
Within the score-matching sector, each conditional field is the corruption kernel itself,
\begin{equation}
\psi^{(\vy)}(\vx, t) = q(\vx, t \mid \vy) = \frac{\cHstar(\vx, \vy)}{P_{\rm data}(\vy)}.
\end{equation}
Replacing the data expectation by its Monte Carlo estimate and applying Jensen's inequality to the convex squared norm then defines the tractable upper bound
\begin{align}
\SDSM &:= \frac12 \int_0^T dt\,\gamma(t) \, \E_{\vy \sim P_{\rm data}} \E_{\vx \sim q(\vx, t\mid \vy)} \left[ \sqnorm{\mathbf{s}_\theta(\vx,t) - \nabla_\vx\ln  q(\vx, t \mid \vy) } \right] \nonumber \\
&\geq \frac12 \int_0^T dt\,\gamma(t) \, \E_{\vx \sim P_t} \left[ \sqnorm{\mathbf{s}_\theta(\vx,t) - \nabla_\vx\ln \left( \E_{\vy \sim P_{\rm data}}[\psi^{(\vy)}(\vx, t)] \right) } \right] = \Sideal .
\label{eq:final_score_matching_loss}
\end{align}
Every term in $\SDSM$ is computable: training samples a data point $\vy$, corrupts it through the reference kernel, and regresses $\mathbf{s}_\theta(\vx,t)$ onto the conditional score $\nabla_\vx \ln q(\vx, t \mid \vy)$, which for affine Gaussian corruption is linear in the noise and reduces Eq.~\eqref{eq:final_score_matching_loss} to the standard denoising score-matching loss of \cite{Vincent2011, Song2019, Ho2020DDPM}.

The Jensen gap $\SDSM-\Sideal$ is the price of this substitution. Since the conditional score is an unbiased estimator of the marginal score, the gap does not shift the minimizer, but it does shift the value of the objective by an additive, model-independent term; the remainder of the paper studies that term---what it is, what geometry it carries, and what it implies for practice.

\section{The Fisher Geometry of the Denoising Loss}
\label{sec:fisher}

Section~\ref{sec:bridge} obtained the generative dynamics from the \emph{first} variation of a path-space relative entropy. We now show that the \emph{second} variation of the same functional determines the geometry of the states the diffusion visits, and that this geometry is exactly what inflates the denoising objective above its ideal value. The construction proceeds in three steps:
\begin{enumerate*}[label=(\arabic*)]
\item the second variation defines a canonical quadratic form on the bridge family, measurable with respect to the endpoints alone (Sec.~\ref{subsec:endpoint_geometry}),
\item  the requirements of a regular, computable corruption select a finite-dimensional reduction of that form (Secs.~\ref{subsec:regular_family}--\ref{subsec:pullback}), and
\item the Jensen gap of Sec.~\ref{subsec:dsm} is identified with its integrated trace (Sec.~\ref{subsec:floor}).
\end{enumerate*}
Derivations are collected in Appendix~\ref{app:latent}.

\subsection{Endpoint geometry of the bridge}
\label{subsec:endpoint_geometry}

Every solution of the bridge problem is an endpoint tilt of the reference:
\begin{equation} \label{eq:tilt_family}
	\ln \cH^{(u,v)}(\vx_{0:N}) = u(\vx_0) + v(\vx_N) + \ln \cG(\vx_{0:N}) - \Lambda[u,v],
\end{equation}
with $\Lambda[u,v] = \ln\int \dx_{0:N}\,\cG\,e^{u(\vx_0)+v(\vx_N)}$ finite on a domain $\cD$, and $\rho^{(u,v)}$ the joint endpoint law under $\cH^{(u,v)}$.

\begin{lemma}[Ambient Fisher form]\label{lem:ambient_fisher}
	For $(u,v)$ interior to $\cD$,
	\begin{equation} \label{eq:ambient_fisher}
		\delta^2\Lambda\big[(\delta u,\delta v)\big] \;=\; \Var_{\rho^{(u,v)}}\big[\delta u(\vx_0)+\delta v(\vx_N)\big].
	\end{equation}
\end{lemma}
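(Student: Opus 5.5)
This is the standard cumulant-generating-function identity, so the plan is to treat $\Lambda$ as the log-partition function of an exponential family on path space. The sufficient statistic is the endpoint pair $(\vx_0,\vx_N)$ paired linearly with $(u,v)$.

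Fix an interior point $(u,v)\in\cD$ and a direction $(\delta u,\delta v)$. I would write $\phi:=\delta u(\vx_0)+\delta v(\vx_N)$ and define the scalar function
\[
\lambda(\epsilon)=\Lambda[u+\epsilon\,\delta u,\,v+\epsilon\,\delta v]=\ln\int \dx_{0:N}\,\cG\,e^{u(\vx_0)+v(\vx_N)}e^{\epsilon\phi}.
\]
By definition $\delta^2\Lambda[(\delta u,\delta v)]=\lambda''(0)$. Factoring out $e^{\Lambda[u,v]}$ gives
\[
\lambda(\epsilon)-\Lambda[u,v]=\ln \E_{\cH^{(u,v)}}\big[e^{\epsilon\phi}\big].
\]
So $\lambda$ is, up to a constant, the cumulant generating function of $\phi$ under $\cH^{(u,v)}$. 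Differentiating under the integral then gives
\[
\lambda'(\epsilon)=\E_{\epsilon}[\phi],\qquad \lambda''(\epsilon)=\E_\epsilon[\phi^2]-\E_\epsilon[\phi]^2,
\]
where $\E_\epsilon$ is taken under the tilted measure $\cH^{(u+\epsilon\delta u,v+\epsilon\delta v)}$. Setting $\epsilon=0$ yields $\Var_{\cH^{(u,v)}}[\phi]$. Because $\phi$ depends only on $(\vx_0,\vx_N)$, its law under $\cH^{(u,v)}$ is the pushforward to the endpoints. The variance can therefore be computed under $\rho^{(u,v)}$, which gives Eq.~\eqref{eq:ambient_fisher}. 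One can also first integrate out the interior coordinates $\vx_{1:N-1}$. This writes $\Lambda=\ln\int d\vx_0\,d\vx_N\,K(\vx_0,\vx_N)e^{u+v}$ with $K$ the endpoint marginal of $\cG$, which reduces everything to a two-variable integral.

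The main obstacle is justifying the two differentiations under the integral sign, that is, the meaning of "interior to $\cD$". I would take the direction to be admissible, meaning $(u+\epsilon\delta u, v+\epsilon\delta v)\in\cD$ for $|\epsilon|<\epsilon_0$. Then $\E_{\cH^{(u,v)}}[e^{\epsilon\phi}]<\infty$ on an open interval around $0$, and the elementary bound $|\phi|^k e^{\epsilon\phi}\le C_k\,(e^{\epsilon_0'\phi}+e^{-\epsilon_0'\phi})$ for $|\epsilon|<\epsilon_0'<\epsilon_0$ supplies a dominating integrable function. Dominated convergence then gives smoothness of the moment generating function and the derivative formulas, exactly as in the standard proof that a moment generating function finite near $0$ is analytic there. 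If $\delta^2\Lambda$ is instead understood as a Fréchet second derivative on a normed space of potentials, the same bound, applied uniformly over unit directions, gives the result. The identity itself then follows from polarization, since $\delta^2\Lambda$ is the quadratic form whose bilinear version is $\Cov_\rho[\phi_1,\phi_2]$.
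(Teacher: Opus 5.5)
Your proposal is correct and follows the paper's proof essentially verbatim: the paper also writes $\Lambda[u+\lambda\delta u,v+\lambda\delta v]=\Lambda[u,v]+\ln\E_{\rho^{(u,v)}}[e^{\lambda S}]$, reads the second term as the cumulant-generating function of $S$, differentiates twice at $0$ to get $\Var_\rho[S]$, and cites the standard exponential-family integrability argument on the interior of $\cD$ to justify differentiating under the integral. Your dominated-convergence bound spells out that last step more explicitly. One small fix: the constant $C_k$ can only be chosen uniformly if you restrict to $|\epsilon|\le\epsilon_1<\epsilon_0'$, since it blows up as $|\epsilon|\to\epsilon_0'$; this does not affect the derivatives at $\epsilon=0$.
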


The proof observes that $\Lambda$ is the cumulant-generating functional of the endpoint evaluations under $\cG$ (Appendix~\ref{app:latent}). For the Schr\"odinger bridge problem $\cG$ need not be a probability measure, owing to the killing potential of Sec.~\ref{sec:bridge}, but for the exponential-family argument it suffices that $\cG$ be $\sigma$-finite with the relevant exponential moments, which places the bridge within the framework of infinite-dimensional exponential families \cite{PistoneSempi1995, CenaPistone2007}.

Expanding Eq.~\eqref{eq:ambient_fisher} yields
\begin{equation} \label{eq:three_terms}
\delta^2\Lambda = \Var_\rho[\delta u(\vx_0)] + \Var_\rho[\delta v(\vx_N)] + 2\,\Cov_\rho[\delta u(\vx_0),\delta v(\vx_N)],
\end{equation}
so the two endpoint sectors are \emph{not} orthogonal in the ambient geometry. In short, the cross term carries the endpoint dependence induced by the reference dynamics and the bridge constraints.

A deeper implication of Eq.~\eqref{eq:ambient_fisher} concerns the tangent space rather than the metric itself. For an infinitesimal deformation of the bridge family, the first-order variation of the log-likelihood, i.e. the tangent vector to the statistical manifold at $\cH^{(u,v)}$, is
\[
\delta\ell = \delta u(\vx_0) + \delta v(\vx_N) - \E[\delta u(\vx_0)+\delta v(\vx_N)],
\]
so \emph{every infinitesimal likelihood ratio within the bridge family is measurable with respect to the endpoint $\sigma$-algebra}. The endpoint pair thus constitutes a sufficient statistic for the local statistical experiment defined by the tilt family, and Eq.~\eqref{eq:ambient_fisher} is its second moment. This places the construction within the classical framework in which Fisher information is monotone under Markov kernels and preserved exactly under sufficient reductions \cite{Chentsov1982, AyJostLeSchwachhofer2017}.

\subsection{A regular conditional endpoint family}
\label{subsec:regular_family}

A natural first candidate for a latent geometry is to condition the path measure on an interior state $X_t=\vx$ and to regard $\vx\mapsto\cH(\cdot\mid X_t=\vx)$ as a statistical family indexed by the spatial coordinate. This construction, however, is not regular. For distinct $\vx\neq\vx'$, the conditioned measures are supported on the disjoint path sets $\{X_t=\vx\}$ and $\{X_t=\vx'\}$ and are therefore mutually singular, so that no finite local Kullback--Leibler expansion exists from which a Fisher metric could be extracted. This obstruction is not a pathology of the bridge formulation but the generic consequence of exact point conditioning for continuous-path measures \cite[see, e.g., Section~2.2]{Kallenberg2002}, and it rules out exact conditioned path measures as a regular Fisher family, motivating instead a reduction that retains the endpoint variables.

The sufficiency of the endpoint tangent space suggests such a reduction: retain the endpoint experiment and condition \emph{it} on the latent state,
\begin{equation} \label{eq:endpoint_immersion}
	\iota_t:\;\vx \;\longmapsto\; p(\vx_0,\vx_N \mid X_t = \vx),
\end{equation}
defining a chain of reductions $\cP_{\rm path}\to\cP_{\rm end}\to\cP_{\rm end\mid lat}$, of which the first step is justified by the endpoint measurability of the bridge tangent space and the second is necessitated by the requirement of a regular parameterization. Equation~\eqref{eq:endpoint_immersion} discards precisely the path information on which the ambient Fisher form does not depend, and while we do not claim that this reduction is unique, it is a minimal reduction consistent with the tangent structure, and it is the one the denoising objective itself uses.

\subsection{Gaussian corruption from structural requirements}
\label{subsec:gaussian_realization}

Equation~\eqref{eq:endpoint_immersion} is still completely general, and the diffusion construction used in practice is obtained by imposing what a trainable model requires:
\begin{enumerate}[label=(\roman*),ref=(\roman*)]
	\item \label{item:req-dataspace} the latent state lives in the data space $\R^d$ rather than a learned code space;
	\item \label{item:req-hierarchy} the states form an ordered Markov hierarchy from the data to a fixed, data-independent prior left invariant by the transition;
	\item \label{item:req-corruption} the corruption is specified before training (hence state-independent), isotropic, and continuous in the continuum limit.
\end{enumerate}
The ordering already constrains the information flow. Along the corruption chain, which runs from the data endpoint $\vx_N$ toward the prior endpoint $\vx_0$, the data-processing inequality gives $I(\vx_N;\vx_{k}) \le I(\vx_N;\vx_{k+1})$, and invariance of the prior under the transition gives $D_{KL}(P_{k}\|P_{\rm prior}) \le D_{KL}(P_{k+1} \|P_{\rm prior})$ whenever the step from $k{+}1$ to $k$ moves along the chain, so that information about the data can only be discarded, never created, as corruption proceeds \cite{CoverThomas2006}.

A state-independent, isotropic, continuous Markov corruption is an additive diffusion $d\vx_t = \vb(t)dt + \sqrt{\gamma(t)}\,d\vW_t$, whose finite-time kernel is Gaussian,
\begin{equation} \label{eq:gaussian_encoder}
	q(\vx,t\mid \vx_N) = \cN\big(\vx;\ \alpha_t \vx_N,\ \sigma_t^2 \mathbf I\big),
\end{equation}
after absorbing the deterministic drift into $\alpha_t$. We present Eq.~\eqref{eq:gaussian_encoder} not as a uniqueness theorem for all corruption processes but as the natural continuous realization of requirements \eqref{item:req-dataspace}--\eqref{item:req-corruption} and the standard family used in practice \cite{Ho2020DDPM, Song2021ScoreBased, kingma2021variational}; relaxing state independence, isotropy, or continuity leads to broader model classes discussed in Sec.~\ref{sec:discussion}.

Read as a function of the clean endpoint, Eq.~\eqref{eq:gaussian_encoder} exhibits a finite sufficient statistic:
\begin{equation} \label{eq:ref_exp_split}
	\ln q(\vx,t\mid\vx_N) = A(\vx,t)\cdot\bs(\vx_N) + B(\vx,t) + C(\vx_N),
	\qquad
	\bs(\vx_N) = \Big(\vx_N,\, -\tfrac12\sqnorm{\vx_N}\Big),
\end{equation}
with $d+1$ components: $d$ first moments coupling to the spatial latent coordinates and one second moment coupling to the noise coordinate. The extra dimension of latent spacetime is thus the second-moment statistic of the corruption. Isotropy is what keeps this statistic one-dimensional: an anisotropic Gaussian requires the full quadratic $-\tfrac12\vx_N\otimes\vx_N$, giving $m = d + d(d{+}1)/2$ natural coordinates for $d+1$ latent coordinates, so the latent family would acquire codimension and become a \emph{curved} exponential family \cite{Efron1975, Amari1982}. Absorbing the $\vx$-independent factors into a carrier $\nu = e^{C}g$ yields
\begin{equation} \label{eq:p_g}
	p(\vx_N\mid\vx,t) = \exp\big[A(\vx,t)\cdot\bs(\vx_N) - \Lambda(A(\vx,t))\big]\,\nu(\vx_N),
\end{equation}
in which the learned model enters only through $\nu$, while the map to natural coordinates is fixed by the corruption. Explicitly, with $\mathrm{SNR}_t = \alpha_t^2/\sigma_t^2$ and $\tilde\vx = \vx/\alpha_t$,
\begin{equation} \label{eq:cone}
	A(\vx,t) \;=\; \Big(\frac{\alpha_t}{\sigma_t^2}\vx,\ \frac{\alpha_t^2}{\sigma_t^2}\Big) \;=\; \mathrm{SNR}_t\cdot\big(\tilde\vx,\,1\big).
\end{equation}
Latent spacetime is therefore a cone in natural-parameter space: the direction of $A$ is the rescaled spatial coordinate and its magnitude is the signal-to-noise ratio. For a nondegenerate schedule, the Jacobian of $A$ has rank $d+1$, so the latent coordinates locally parameterize an open subset of natural-parameter space and inherit its dual affine structure, while a monotone reparametrization of time slides along the ray Eq.~\eqref{eq:cone} without changing it---the geometric content of the schedule reparametrization invariance of diffusion objectives \cite{kingma2021variational}, to which we return in Sec.~\ref{sec:infoflow}.

\subsection{The pullback metric}
\label{subsec:pullback}

The immersion $\iota_{\rm lat}:(\vx,t)\mapsto A(\vx,t)$ carries the ambient form to latent spacetime, and for coordinates $z^\mu,z^\nu\in\{x^1,\dots,x^d,t\}$ the induced metric factorizes as
\begin{equation} \label{eq:factorization}
	g_{\mu\nu} \;=\; \underbrace{\partial_\mu A^a\,\partial_\nu A^b}_{\text{fixed by the schedule}}\; \underbrace{\Cov_{p(\vx_N\mid\vx,t)}\big[\bs_a,\bs_b\big]}_{\text{data and model}}.
\end{equation}
Once the corruption and its sufficient statistic are fixed, the metric follows from the second variation, and the two factors separate cleanly due to state independence of the corruption.

\begin{proposition}[Endpoint split]\label{prop:pullback}
	The two endpoints are conditionally independent given the latent state, $p(\vx_0,\vx_N\mid X_t=\vx) = p(\vx_0\mid X_t=\vx)\,p(\vx_N\mid X_t=\vx)$, and consequently
	\begin{equation} \label{eq:tangent_chain}
		g_{\mu\nu} = g^{\rm past}_{\mu\nu} + g^{\rm future}_{\mu\nu}.
	\end{equation}
\end{proposition}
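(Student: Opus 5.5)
The plan has two parts. First, conditional independence of the endpoints given the interior state follows from the Markov property of the bridge. Second, the metric splits additively because Fisher information is additive over independent components.

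\textbf{Step 1: conditional independence.} The reference $\cG$ factorizes as $P_0(\vx_0)\prod_k G(\vx_{k+1}\mid\vx_k)$. By Eq.~\eqref{eq:tilt_family}, every member of the bridge family multiplies $\cG$ only by endpoint factors $e^{u(\vx_0)}e^{v(\vx_N)}$, so it is Markov as well. Let $t$ correspond to step $k$. Integrating the path density over all intermediate coordinates except $\vx_k$ factorizes the joint law as
\[
p(\vx_0,\vx_k,\vx_N)\;\propto\;\Big[e^{u(\vx_0)}P_0(\vx_0)\,G^{(0\to k)}(\vx_k\mid\vx_0)\Big]\cdot\Big[G^{(k\to N)}(\vx_N\mid\vx_k)\,e^{v(\vx_N)}\Big],
\]
where $G^{(i\to j)}$ denotes the composed kernels. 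Dividing by the marginal of $\vx_k$ gives $p(\vx_0,\vx_N\mid\vx_k)=p(\vx_0\mid\vx_k)\,p(\vx_N\mid\vx_k)$. The killing potential and the possible non-normalization of $\cG$ do not matter here, because every factor is positive and the normalization cancels in the conditional.

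\textbf{Step 2: additive split of the metric.} I would define $g_{\mu\nu}$ on latent spacetime as the Fisher information of the family $z\mapsto p(\vx_0,\vx_N\mid X_t=\vx)$ in Eq.~\eqref{eq:endpoint_immersion}, with $z=(\vx,t)$. By Step 1 the log-likelihood splits as $\ell=\ln p(\vx_0\mid z)+\ln p(\vx_N\mid z)$, so the score does as well: $\partial_\mu\ell=\partial_\mu\ell^{\rm past}+\partial_\mu\ell^{\rm future}$. Expanding $\E[\partial_\mu\ell\,\partial_\nu\ell]$ produces the two diagonal terms, $g^{\rm past}_{\mu\nu}$ and $g^{\rm future}_{\mu\nu}$, together with two cross terms $\E[\partial_\mu\ell^{\rm past}\,\partial_\nu\ell^{\rm future}]$. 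Under the conditional law the two scores are independent, so each cross term factorizes into $\E[\partial_\mu\ell^{\rm past}]\,\E[\partial_\nu\ell^{\rm future}]$. Each factor is the expectation of a score and therefore vanishes, provided differentiation under the integral is permitted. This is the standard additivity of Fisher information for independent components.

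The same result can be stated in the exponential-family language of Eq.~\eqref{eq:factorization}. The conditioned endpoint law is a product of two exponential families, with natural parameters $A^{\rm past}(\vx,t)$ and $A^{\rm future}(\vx,t)$ built from their respective sufficient statistics. The covariance of the combined statistic is then block diagonal, and the pullback separates into the two terms.

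\textbf{Main obstacle.} The delicate step is regularity rather than algebra. Both conditional families must be differentiable in $z$ with finite second moments of their scores, and exchanging $\partial_\mu$ with the integral must be justified, so that the score has mean zero. I would impose this directly, as in the paper's ``mild regularity'' assumptions. Concretely, I would require smooth positive kernels with locally dominated derivatives; Gaussian kernels satisfy this on the interior of $(0,T)$. I would also note that the $t$-derivative acts on the conditioning time, which moves the split point $k$. To handle this, I would treat $t$ continuously through the time-dependent kernels $G^{(0\to t)}$ and $G^{(t\to T)}$. The factorization from Step 1 holds for each $t$, so the additive split survives differentiation in $t$.
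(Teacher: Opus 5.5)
Your proposal is correct and follows the same route as the paper's proof in Appendix~\ref{app:latent}: the endpoint-tilted bridge is Markov, so $(\vx_0,\vx_N)$ are conditionally independent given $X_t=\vx$, and the joint score is the sum of the past and future scores. The cross terms then vanish because they factor into products of zero-mean scores. Your version is somewhat more careful than the paper's, since it writes out the path-density factorization, states the regularity needed to differentiate under the integral, and treats the $t$-direction explicitly; the exponential-family restatement of the past sector, however, assumes a finite-dimensional sufficient statistic (e.g.\ a Gaussian reference kernel) and is not needed for the argument.
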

The endpoint sectors are not orthogonal in the ambient metric Eq.~\eqref{eq:three_terms}; the cross term vanishes only after conditioning on the intermediate state, by the Markov property. The past sector $g^{\rm past}$ is the Fisher information of the seed posterior $p(\vx_0\mid\vx,t)$, measuring how much the corrupted state reveals about its initialization, and it is generically nonzero yet invisible to denoising score matching, whose regression target involves the clean endpoint alone.

\begin{proposition}[The future sector is the loss metric]\label{prop:tangent}
	For the corruption Eq.~\eqref{eq:gaussian_encoder},
	\begin{equation} \label{eq:g_chi}
	\begin{split}
		g^{\rm future}_{ij}(\vx,t) &= \Cov_{p(\vx_N\mid\vx,t)}\big[\partial_i \ln q,\ \partial_j \ln q\big] = \frac{\alpha_t^2}{\sigma_t^4}\Cov\big[x_N^i,x_N^j \mid X_t=\vx\big] \\
		&= \frac{1}{\sigma_t^2}\delta_{ij} + \partial_i\partial_j \ln P_t(\vx).
	\end{split}
	\end{equation}
\end{proposition}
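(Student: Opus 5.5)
The plan is to prove the three equalities in Eq.~\eqref{eq:g_chi} in turn, working with the posterior $p(\vx_N\mid\vx,t)=P_{\rm data}(\vx_N)\,q(\vx,t\mid\vx_N)/P_t(\vx)$. Here $P_t(\vx)=\int P_{\rm data}(\vy)\,q(\vx,t\mid\vy)\,d\vy$.

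\textbf{First equality.} Restrict the factorization Eq.~\eqref{eq:factorization} to the spatial coordinates $z^\mu=x^i$. By Eq.~\eqref{eq:ref_exp_split}, $\partial_i\ln q=\partial_i A^a\,\bs_a(\vx_N)+\partial_i B(\vx,t)$. The term $\partial_i B$ is constant under the posterior, so it drops out of any covariance. Hence $\partial_i A^a\partial_j A^b\Cov[\bs_a,\bs_b]=\Cov[\partial_i\ln q,\partial_j\ln q]$.

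Equivalently, $\partial_i\ln p(\vx_N\mid\vx,t)=\partial_i\ln q-\partial_i\ln P_t$. The posterior score therefore differs from $\partial_i\ln q$ by a $\vx_N$-independent shift, and its second moment is this covariance. This is the Fisher information of the family $\vx\mapsto p(\cdot\mid\vx,t)$. It is the future sector of Proposition~\ref{prop:pullback}, because conditional independence makes the joint score additive with uncorrelated summands.

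\textbf{Second equality.} Use the explicit Gaussian form. We have $\partial_i\ln q=-(x^i-\alpha_t x_N^i)/\sigma_t^2=(\alpha_t/\sigma_t^2)\,x_N^i-x^i/\sigma_t^2$. The second piece is deterministic given $\vx$, so the covariance equals $(\alpha_t^2/\sigma_t^4)\Cov[x_N^i,x_N^j\mid X_t=\vx]$.

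This agrees with Eq.~\eqref{eq:cone}. Only the first block of $A$ depends on $\vx$, with $\partial_i A^a=(\alpha_t/\sigma_t^2)\delta^a_i$, so the second-moment statistic does not enter the spatial block.

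\textbf{Third equality.} This is a Tweedie-type second-order identity, obtained by differentiating $\ln P_t$ under the integral sign twice.
\begin{enumerate}[label=(\alph*)]
\item The first derivative gives $\partial_i\ln P_t=\E[\partial_i\ln q\mid\vx]$.
\item Differentiating again uses $\partial_j p(\vx_N\mid\vx,t)=p\,(\partial_j\ln q-\partial_j\ln P_t)$.
\item This yields $\partial_i\partial_j\ln P_t=\E[\partial_i\partial_j\ln q\mid\vx]+\Cov[\partial_i\ln q,\partial_j\ln q\mid\vx]$.
\end{enumerate}
For the Gaussian kernel, $\partial_i\partial_j\ln q=-\delta_{ij}/\sigma_t^2$ is deterministic. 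Rearranging gives $g^{\rm future}_{ij}=\delta_{ij}/\sigma_t^2+\partial_i\partial_j\ln P_t$.

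In exponential-family language, this is the statement that $\nabla_A^2\Lambda$ is the covariance of $\bs$. The computation is routine.

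\textbf{Main obstacle.} The only delicate step is justifying differentiation under the integral sign twice and the finiteness of the posterior covariance. I would assume $P_{\rm data}$ has finite second moment. Gaussian tails of $q$ then give dominated convergence for all $t$ with $\sigma_t>0$, since $|\partial_i\ln q|$ and $|\partial_i\partial_j\ln q|$ grow at most quadratically in $\vx_N$ and are integrable against $P_{\rm data}(\vx_N)\,q$ locally uniformly in $\vx$.
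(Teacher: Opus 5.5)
Your proposal is correct and is essentially the paper's proof in Appendix~\ref{app:latent}. The paper also reads off $(\alpha_t^2/\sigma_t^4)\Cov[x_N^i,x_N^j\mid X_t=\vx]$ from the fact that $\partial_i\ln q$ depends on $\vx_N$ only through $\alpha_t x_N^i/\sigma_t^2$, and then gets the Hessian form by differentiating Tweedie's formula $\partial_i\ln P_t=(\alpha_t\E[x_N^i\mid\vx]-x_i)/\sigma_t^2$ once more; that is the same computation as your kernel-agnostic identity $\partial_i\partial_j\ln P_t=\E[\partial_i\partial_j\ln q\mid\vx]+\Cov[\partial_i\ln q,\partial_j\ln q\mid\vx]$ specialized to the Gaussian Hessian $-\delta_{ij}/\sigma_t^2$ (your finite-second-moment assumption on $P_{\rm data}$ is harmless but unnecessary, since for $\alpha_t\neq0$ the factor $q(\vx,t\mid\vx_N)$ already supplies Gaussian decay in $\vx_N$ locally uniformly in $\vx$).
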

Equation~\eqref{eq:g_chi} is the tensor whose trace appears in the decomposition theorem below. This metric has been studied descriptively in recent work \cite{karczewski2026spacetime, lobashev2025hessian} and the present derivation shows that it follows from the second variation of the bridge objective. It is worth emphasizing two aspect of this construction. First, the full latent metric is Eq.~\eqref{eq:tangent_chain}, while the training objective sees only its future sector, so the geometry is not an ad hoc construction reverse-engineered from the loss but contains strictly more structure than the loss probes. Second, the cone structure makes the information hierarchy quantitative. Writing $K(\theta) = D_{KL}(p_\theta\|\nu)$ for the information the latent carries about the clean endpoint, the exponential-family identity $\nabla_\theta K = g(\theta)\theta$ gives, along the ray $\theta = r\,n$ of Eq.~\eqref{eq:cone},
\begin{equation} \label{eq:radial_info}
	\frac{dK}{dr} \;=\; r\,g_{rr}, \qquad g_{rr} = n^{\top}\Cov_{p}[\bs]\,n .
\end{equation}
Since the radial coordinate is the signal-to-noise ratio, the information the latent retains about the data decays along the corruption at a rate set by the radial component of the Fisher metric, which is the pointwise counterpart of the integrated identity of Sec.~\ref{sec:infoflow} and a first indication that the loss floor and an accumulated information are the same quantity rather than merely equal numbers.

The construction of this section assembles classical components. The identification of the Fisher metric with the local Kullback--Leibler form and its monotonicity under Markov kernels are due to Chentsov \cite{Chentsov1982, AyJostLeSchwachhofer2017}, the endpoint-tilt family is an instance of the infinite-dimensional exponential families of Pistone and Sempi \cite{PistoneSempi1995, CenaPistone2007}, the induced geometry of a finite-dimensional submanifold is the theory of curved exponential families of Efron and Amari \cite{Efron1975, Amari1982}, and the reparametrization invariance in the SNR coordinate is standard \cite{kingma2021variational}. What is new here is the assembly: the ambient form is fixed by the bridge variational principle, and the latent metric is a pullback of that fixed form rather than a separate postulate.

\subsection{The irreducible loss floor}
\label{subsec:floor}

It remains to identify the geometry of Secs.~\ref{subsec:endpoint_geometry}--\ref{subsec:pullback} with the Jensen gap of Sec.~\ref{subsec:dsm}. Denote by $q(\vy \mid \vx, t) = P_{\rm data}(\vy)\, q(\vx, t \mid \vy)/P(\vx,t)$ the posterior over clean data given the noisy observation, and write $\nabla \ln q(\vy \mid \vx, t)$ for its score in $\vx$. Expanding the squared norm in Eq.~\eqref{eq:final_score_matching_loss} around the ideal target splits the gap into two contributions:
\begin{align}
	\begin{split}
		\SDSM - \Sideal &= \frac12 \int_0^T dt\,\gamma(t)\, \E_{\vx \sim P_t} \E_{\vy \sim q(\vy \mid \vx, t)}\Big\{ \\
		&\qquad \sqnorm{\nabla \ln  q(\vy \mid \vx, t)} - 2\,\nabla \ln  q(\vy \mid \vx, t) \cdot \big(\mathbf{s}_{\theta} - \nabla \ln  P(\vx, t) \big)  \Big\}.
	\end{split}
	\label{eq:gap_split}
\end{align}
The cross term vanishes identically for \emph{every} $\theta$, not merely at convergence: by the score identity
\begin{equation}
	\E_{\vy \sim q(\cdot\mid\vx,t)}\left[\nabla_\vx \ln q(\vy\mid\vx,t)\right]
	= \nabla_\vx \int d\vy\, q(\vy \mid \vx, t) = 0,
\end{equation}
its posterior expectation is zero point-wise in $\vx$, so only the first, model-independent term survives and the gap carries no dependence on the model:
\begin{align}
	\SDSM - \Sideal
	&= \frac{1}{2}\int_0^T dt\, \gamma(t)\; \E_{\vx \sim P_t}
	\E_{\vy \sim q(\vy\mid\vx,t)} \sqnorm{ \nabla_\vx \ln q(\vy \mid \vx, t) } \nonumber \\
	&= \frac{1}{2}\int_0^T dt\, \gamma(t)\; \E_{\vx \sim P_t}
	\left[ \mathrm{tr}\, \Cov_{\vy \mid \vx}\!\left[ \nabla_\vx \ln q(\vx, t \mid \vy) \right] \right],
	\label{eq:gap_variance}
\end{align}
where the second equality uses Bayes' rule, $\nabla_\vx \ln q(\vy\mid\vx,t) = \nabla_\vx \ln q(\vx,t\mid\vy) - \nabla_\vx \ln P(\vx,t)$, by which the posterior score is the centered conditional score.

\begin{theorem}[The loss floor is a Fisher information] \label{thm:floor_fisher}
	Let $q(\vx,t\mid\vy)$ be a corruption kernel satisfying the following regularity
	conditions: $q(\vx,t\mid\vy) > 0$ on the support of interest; $\vx \mapsto \ln q(\vx,t\mid\vy)$
	is differentiable; and the gradient $\nabla_{\vx} q(\vx,t \mid \vy)$ is uniformly dominated by an integrable function of $\vy$ in a sufficiently small neighbourhood of each $\vx$, so that differentiation under the integral sign is permitted. Let $P_t$ be the induced marginal and let
	\begin{equation} \label{eq:cond_endpoint_family}
		p(\vy \mid \vx,t) \;=\; \frac{P_{\rm data}(\vy)\,q(\vx,t\mid\vy)}{P_t(\vx)}
	\end{equation}
	be the \emph{conditional endpoint family}: the family of laws over clean
	endpoints $\vy$ indexed by the latent coordinate $\vx$, which plays the role of
	the parameter. Let
	\begin{equation} \label{eq:posterior_fisher}
		g_{ij}(\vx,t) \;:=\; \E_{\vy \mid \vx}\big[\partial_i \ln p(\vy\mid\vx,t)\,\partial_j \ln p(\vy\mid\vx,t)\big]
	\end{equation}
	be its Fisher--Rao metric. Then
	\begin{equation} \label{eq:thm_floor}
		\SDSM - \Sideal
		\;=\; \frac{1}{2}\int_0^T dt\,\gamma(t)\;\E_{\vx\sim P_t}\big[\operatorname{tr} g(\vx,t)\big].
	\end{equation}
\end{theorem}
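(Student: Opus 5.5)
The proof is a pointwise computation at fixed $(\vx,t)$ followed by integration against $\gamma(t)\,dt$ and $P_t$. The plan is to decompose the integrand of $\SDSM$ into its posterior mean and its fluctuation. Using Bayes' rule, the conditional score $\nabla_\vx\ln q(\vx,t\mid\vy)$ splits as $\nabla_\vx \ln P_t(\vx) + \nabla_\vx \ln p(\vy\mid\vx,t)$. This uses only positivity of $q$ and differentiability of $\ln q$, which also make $P_t>0$ and $\ln P_t$ differentiable once differentiation under the integral is justified. Setting $\vb(\vx,\vy):=\nabla_\vx\ln p(\vy\mid\vx,t)$, the residual $\mathbf{s}_\theta-\nabla\ln q(\vx,t\mid\vy)$ equals $(\mathbf{s}_\theta-\nabla\ln P_t)-\vb$. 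The first piece does not depend on $\vy$.

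Next I rewrite the joint expectation $\E_{\vy\sim P_{\rm data}}\E_{\vx\sim q}$ as $\E_{\vx\sim P_t}\E_{\vy\sim p(\cdot\mid\vx,t)}$, which is Fubini for the joint density $P_{\rm data}(\vy)q(\vx,t\mid\vy)$. Expanding the square gives three terms. The first is $\sqnorm{\mathbf{s}_\theta-\nabla\ln P_t}$, which integrates to $\Sideal$. The second is the cross term $-2(\mathbf{s}_\theta-\nabla\ln P_t)\cdot\E_{\vy\mid\vx}[\vb]$. The third is $\E_{\vy\mid\vx}\sqnorm{\vb}$. By definition, this last term is $\sum_i g_{ii}(\vx,t)=\operatorname{tr} g(\vx,t)$ in Eq.~\eqref{eq:posterior_fisher}.

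The key step, and the only real obstacle, is showing that the posterior score has zero mean: $\E_{\vy\mid\vx}[\vb]=0$. Formally,
\[
\E_{\vy\mid\vx}[\vb]=\int \nabla_\vx p(\vy\mid\vx,t)\,d\vy=\nabla_\vx\int p(\vy\mid\vx,t)\,d\vy=\nabla_\vx 1=0.
\]
Two facts are needed to make this rigorous. First, the domination hypothesis on $\nabla_\vx q$ justifies $\nabla_\vx P_t=\int P_{\rm data}\nabla_\vx q\,d\vy$, so $P_t$ is differentiable. Second, $\nabla_\vx p(\vy\mid\vx,t)=P_{\rm data}(\vy)\big[\nabla_\vx q/P_t - q\nabla_\vx P_t/P_t^2\big]$ is locally dominated by an integrable function of $\vy$, since $P_t$ is bounded away from zero and $\nabla P_t$ is bounded near $\vx$. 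With these, the interchange is valid. Equivalently, and more simply, one can avoid differentiating $p$ altogether: $\E_{\vy\mid\vx}[\nabla\ln q(\vx,t\mid\vy)]=\int P_{\rm data}\nabla_\vx q\,d\vy/P_t=\nabla\ln P_t$, using the same domination. This shows the cross term vanishes pointwise and for every $\theta$.

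Finally, I integrate over $\vx\sim P_t$ and over $t$ with weight $\tfrac12\gamma(t)$, which yields Eq.~\eqref{eq:thm_floor}. This matches Eq.~\eqref{eq:gap_variance}, since $\E\sqnorm{\vb}=\operatorname{tr}\Cov_{\vy\mid\vx}[\nabla\ln q]$. I would add a remark on finiteness. If $\E_{P_t}\operatorname{tr} g=\infty$ on a set of times with positive $\gamma$-measure, then both sides are $+\infty$ and the identity holds in the extended sense. Otherwise the cross term is absolutely integrable by Cauchy--Schwarz, provided $\Sideal<\infty$, so splitting the integral into its three pieces is legitimate.
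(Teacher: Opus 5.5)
Your proposal is correct and is essentially the paper's argument: the Bayes split $\nabla_\vx\ln q=\nabla_\vx\ln P_t+\nabla_\vx\ln p$, expansion of the square with the cross term killed pointwise in $\vx$ (hence for every $\theta$) by the zero posterior mean of the score, and identification of the surviving second moment with $\operatorname{tr} g$, exactly as in Eqs.~\eqref{eq:gap_split}--\eqref{eq:gap_variance} and the short proof that follows. Two small remarks: your route via $\E_{\vy\mid\vx}[\nabla_\vx\ln q]=\nabla_\vx\ln P_t$ uses the stated domination of $\nabla_\vx q$ directly rather than differentiating $\int p\,d\vy$; and your finiteness case analysis can be dropped, since with $\mathbf{a}:=\mathbf{s}_\theta-\nabla_\vx\ln P_t$ the identity $\E_{\vy\mid\vx}\|\mathbf{a}-\vb\|^2=\|\mathbf{a}\|^2+\E_{\vy\mid\vx}\|\vb\|^2$ holds pointwise in $[0,\infty]$, so $\SDSM=\Sideal+\mathrm{floor}$ holds for the nonnegative integrals even when the floor is finite for a.e.\ $t$ but diverges after integration over $t$ (where your global Cauchy--Schwarz step would not apply).
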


\begin{proof}
	Since $p(\vy\mid\vx,t) = P_{\rm data}(\vy)q(\vx,t\mid\vy)/P_t(\vx)$ and $P_{\rm data}(\vy)$ does not depend on $\vx$, differentiating the logarithm with respect to $\vx$ gives
	\begin{equation} \label{eq:posterior_score}
		\nabla_\vx \ln p(\vy\mid\vx,t) \;=\; \nabla_\vx \ln q(\vx,t\mid\vy) \;-\; \nabla_\vx \ln P_t(\vx).
	\end{equation}
	Taking $\E_{\vy\mid\vx}$ of Eq.~\eqref{eq:posterior_score} and using $\E_{\vy\mid\vx}[\nabla_\vx \ln p]=\nabla_\vx\!\int d\vy\, p(\vy\mid\vx,t) = 0$ recovers the unbiasedness identity of denoising score matching: the conditional score is an unbiased estimator of the marginal score. The posterior score is therefore centered under the posterior, and its second moment equals its covariance,
	\[
	\E_{\vy\mid\vx}\left[ \nabla_\vx \ln p(\vy\mid\vx,t)\,\nabla_\vx \ln p(\vy\mid\vx,t)^\top \right]
	= g(\vx,t),
	\]
	which is simultaneously the Fisher--Rao metric \eqref{eq:posterior_fisher} and the integrand of Eq.~\eqref{eq:gap_variance}. Taking the trace and inserting into Eq.~\eqref{eq:gap_variance} gives Eq.~\eqref{eq:thm_floor}.
\end{proof}

Beyond the stated regularity conditions, the theorem assumes nothing about the corruption kernel, such as Gaussianity, affine structure, or exponential-family form, and its content is a conditional-variance identity. The ideal objective $\Sideal$ compares the model against $\nabla \ln P_t$, which by Eq.~\eqref{eq:posterior_score} is the posterior \emph{mean} of the conditional score. The implementable objective $\SDSM$ cannot evaluate that mean and substitutes a single draw $\vy \sim p(\cdot\mid\vx,t)$, which is unbiased but noisy. The point-wise decomposition now reads
\begin{equation} \label{eq:mse_split}
\underbrace{\E\big\|\mathbf{s}_\theta - \mathbf{s}_{\rm cond}\big\|^2}_{\text{denoising objective}}
\;=\; \underbrace{\big\|\mathbf{s}_\theta - \nabla\!\ln P_t\big\|^2}_{\text{estimation error}}
\;+\; \underbrace{\Var\big(\mathbf{s}_{\rm cond}\mid \vx\big)}_{\text{irreducible conditional variance}} ,
\end{equation}
in which the second term on the right-hand side is the Fisher information of the latent immersion $\iota_t:\vx\mapsto p(\cdot\mid\vx,t)$ of Eq.~\eqref{eq:endpoint_immersion}. This is the same computation as that of a bias--variance decomposition, though the first term is a squared estimation error rather than a bias, since $\mathbf{s}_\theta$ is deterministic given $\vx$. Since the conditional variance of a regression target does not involve the regressor, the floor cannot depend on $\theta$. In the language of Sec.~\ref{subsec:regular_family}: \emph{the latent Fisher metric is the covariance of the random target introduced by denoising score matching}, and the loss floor is its integrated trace along the generative flow.

For the affine Gaussian kernel, the conditional score is $(\alpha_t\vy - \vx)/\sigma_t^2$, and Eq.~\eqref{eq:posterior_score} yields $\nabla_\vx \ln p(\vy\mid\vx,t) = (\alpha_t/\sigma_t^2)(\vy - \E[\vy\mid\vx])$, so the metric reduces to the closed form Eq.~\eqref{eq:g_chi}. 

\section{The Loss Floor as Information Flow}
\label{sec:infoflow}

Theorem~\ref{thm:floor_fisher} identified the irreducible part of the denoising objective with a Fisher--Rao metric of the endpoint posterior, and for affine kernels, the explicit expression is given by Eq.~\eqref{eq:g_chi}. The remaining question is what the metric measures when accumulated along the diffusion trajectory. Remarkably, the time integral has a simple information-theoretic form: it is the amount of information about the clean endpoint that is lost under corruption.

This interpretation also clarifies several properties of the loss floor that otherwise appear unrelated. The familiar schedule invariance of the diffusion objective becomes a consequence of reparametrization invariance. Moreover, the divergence near the data end is controlled by the information dimension of the data, and the decomposition admits a natural thermodynamic interpretation. Proofs are collected in Appendix~\ref{app:infoflow}, with numerical details in Appendix~\ref{app:numerics}.

\subsection{The mutual-information identity}
\label{subsec:mi_identity}

The Fisher--Rao representation of Theorem~\ref{thm:floor_fisher} is local in the corruption parameter. To understand the accumulated floor, we first ask what its instantaneous integrand measures. Although the explicit formulas below are often encountered in the special case of affine Gaussian corruption, the first step requires only the corruption kernel itself. Let
\[
J(p) := \E_p\|\nabla \ln p\|^2
\]
denote the Fisher information of a density with respect to translations.

The answer to the above question is particularly simple. Averaged over the noisy marginal, the trace of the posterior Fisher metric is exactly the Fisher information lost when the endpoint-conditioned corruption kernels are mixed over the data distribution.

\begin{lemma}[Fisher-information form of the floor]
	\label{lem:fisher_gap}
	For any corruption kernel,
	\begin{equation}
		\E_{P_t}\big[\operatorname{tr} g(\cdot,t)\big]
		=
		\E_{\vy}\big[J(q_t(\cdot\mid\vy))\big] - J(P_t).
		\label{eq:fisher_gap}
	\end{equation}
\end{lemma}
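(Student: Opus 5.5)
The plan is to start from the pointwise identity in the proof of Theorem~\ref{thm:floor_fisher}. For fixed $\vx$, the posterior score $\nabla_\vx \ln p(\vy\mid\vx,t) = \nabla_\vx \ln q(\vx,t\mid\vy) - \nabla_\vx\ln P_t(\vx)$ has zero posterior mean, and its second moment is $g(\vx,t)$. Taking the trace of this covariance identity gives
\[
\operatorname{tr} g(\vx,t) \;=\; \E_{\vy\mid\vx}\big\|\nabla_\vx \ln q(\vx,t\mid\vy)\big\|^2 \;-\; \big\|\nabla_\vx \ln P_t(\vx)\big\|^2 .
\]
This is simply the statement that the second moment equals the variance plus the squared mean. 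The squared mean here is $\|\nabla\ln P_t\|^2$, by the unbiasedness identity already established in the proof of Theorem~\ref{thm:floor_fisher}, which uses the differentiation-under-the-integral hypothesis stated there.

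Next I average over $\vx\sim P_t$. The second term integrates directly to $\E_{P_t}\|\nabla\ln P_t\|^2 = J(P_t)$. For the first term, I write the joint law of $(\vx,\vy)$ in both orders: $P_t(\vx)\,p(\vy\mid\vx,t) = P_{\rm data}(\vy)\,q(\vx,t\mid\vy)$. Fubini (Tonelli, since the integrand is nonnegative) then lets me exchange the order of integration:
\[
\E_{\vx\sim P_t}\E_{\vy\mid\vx}\|\nabla_\vx\ln q(\vx,t\mid\vy)\|^2 \;=\; \E_{\vy\sim P_{\rm data}}\E_{\vx\sim q_t(\cdot\mid\vy)}\|\nabla_\vx \ln q_t(\vx\mid\vy)\|^2 \;=\; \E_\vy\big[J(q_t(\cdot\mid\vy))\big].
\]
Subtracting the two averaged terms gives Eq.~\eqref{eq:fisher_gap}.

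The main obstacle is integrability, not algebra. The subtraction is only meaningful when both terms are finite; otherwise the identity has to be read in the extended sense. So I would note that $J(P_t)\le \E_\vy J(q_t(\cdot\mid\vy))$ always holds, by Jensen or by nonnegativity of $\operatorname{tr} g$. Then, whenever $\E_\vy J(q_t(\cdot\mid\vy))<\infty$, every quantity is finite and the derivation goes through. If instead that average is infinite, I would argue that both sides are $+\infty$, since $\E_{P_t}\operatorname{tr} g \ge \E_\vy J(q_t) - J(P_t)$ with $J(P_t)$ finite or handled separately. Alternatively, I would state the lemma under the standing assumption that the conditional Fisher informations are finite on average, which is certainly true for the Gaussian kernel, where $J(q_t(\cdot\mid\vy)) = d/\sigma_t^2$. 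Positivity of $q$ from Theorem~\ref{thm:floor_fisher} ensures that the logarithms and Bayes' rule are well defined.
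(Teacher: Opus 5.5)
Your proof is correct and is essentially the paper's argument. Both expand the posterior score through Bayes' rule and remove the cross term with the centering identity $\E_{\vy\mid\vx}[\nabla_\vx\ln q(\vx,t\mid\vy)]=\nabla_\vx\ln P_t(\vx)$; then both exchange the order of integration over the joint law (the paper's ``tower property''), which turns the first term into $\E_\vy J(q_t(\cdot\mid\vy))$. Your integrability remarks go beyond the paper, which is silent on finiteness, and of your two options the standing assumption $\E_\vy J(q_t(\cdot\mid\vy))<\infty$ is the clean one, since without it the case $J(P_t)=\infty$ leaves the right-hand side as $\infty-\infty$.
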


Thus the instantaneous floor is the gap between the Fisher information available when the clean endpoint $\vy$ is known and that remaining after $\vy$ has been marginalized out. Its non-negativity is the corresponding monotonicity of Fisher information under mixing \cite{CoverThomas2006}.

To turn this local identity into a statement about the full loss floor, we now let the corruption be generated continuously by a diffusion. Suppose that
\[
d\vx_t = f(\vx_t,t)\,dt + \sqrt{D(t)}\,d\vW_t
\]
starts from the data distribution, where $f$ and $D$ are independent of $\vy$, and $t$ increases in the corruption direction. We write $D(t)$ rather than $\gamma(t)$ to distinguish the corruption process considered here from the bridge dynamics of Sec.~\ref{sec:bridge}. Along such a diffusion, the same Fisher-information gap determines the rate at which the noisy variable forgets the clean endpoint:
\begin{lemma}[de Bruijn identity along a corruption diffusion]
	\label{lem:debruijn}
	Under the assumptions above,
	\begin{equation}
		\frac{d}{dt} I(\vy;\vx_t)
		=
		-\frac{D(t)}{2}
		\Big(
		\E_\vy J(q_t(\cdot\mid\vy)) - J(P_t)
		\Big)
		=
		-\frac{D(t)}{2}\,
		\E_{P_t}\big[\operatorname{tr} g(\cdot,t)\big]
		\leq 0.
		\label{eq:debruijn}
	\end{equation}
\end{lemma}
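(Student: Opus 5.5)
Write $I(\vy;\vx_t)=h(\vx_t)-\E_\vy h(\vx_t\mid\vy)$, where $h$ denotes differential entropy. Then compute the time derivative of each entropy separately using the de Bruijn-type identity for a general drift-diffusion. The second equality in Eq.~\eqref{eq:debruijn} follows at once from Lemma~\ref{lem:fisher_gap}. The sign follows from $\operatorname{tr}g\ge 0$, since $g$ is a second-moment matrix. So the work is the first equality.

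\textbf{Step 1 (entropy production for one Fokker--Planck flow).} Fix $\vy$. Since $f$ and $D$ do not depend on $\vy$, the conditional density $q_t(\cdot\mid\vy)$ obeys
\[
\partial_t q=-\nabla\cdot(fq)+\tfrac{D}{2}\nabla^2 q .
\]
Mixing is linear, so the marginal $P_t$ obeys the same equation. For any density $p$ solving this equation, differentiate $h(p)=-\int p\ln p$ and integrate by parts, using $\int\partial_t p=0$ and assuming vanishing boundary terms. This gives
\[
\frac{d}{dt}h(p)=-\int (\nabla\cdot(fp))\ln p\,d\vx+\frac{D}{2}J(p)
=\E_p[\nabla\cdot f]+\frac{D}{2}J(p),
\]
where $\int \nabla\cdot(fp)\ln p=-\int f\cdot\nabla p=\int p\,\nabla\cdot f$.

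\textbf{Step 2 (subtract).} Apply Step~1 to $P_t$, and to each $q_t(\cdot\mid\vy)$ averaged over $\vy\sim P_{\rm data}$. The drift contributions are $\E_{P_t}[\nabla\cdot f]$ and $\E_\vy\E_{q_t(\cdot\mid\vy)}[\nabla\cdot f]$. These are identical because $P_t=\E_\vy q_t(\cdot\mid\vy)$; this is exactly where the $\vy$-independence of $f$ is used. They cancel, leaving
\[
\frac{d}{dt}I=\frac{D}{2}\big(J(P_t)-\E_\vy J(q_t(\cdot\mid\vy))\big).
\]
Combined with Lemma~\ref{lem:fisher_gap}, this is Eq.~\eqref{eq:debruijn}.

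\textbf{Main obstacle.} The main difficulty is justifying the analytic manipulations. These are interchanging $d/dt$ with $\int$, and both with $\E_\vy$; discarding the boundary terms in the integrations by parts; and keeping the entropies and Fisher informations finite. A delicate case is $t\to0$, where $q_t(\cdot\mid\vy)$ is nearly a Dirac mass and $h(\vx_t\mid\vy)\to-\infty$. I would therefore prove the identity for $t>0$ only. I would assume enough regularity (smooth positive densities with sufficient decay, $f$ of at most linear growth, $D(t)>0$) for the parabolic equation to give integrable $p|\ln p|$ and $p\|\nabla\ln p\|^2$, uniformly on compact subintervals of $(0,T]$. As a cross-check, I would verify the identity in the Gaussian case against the I--MMSE relation.
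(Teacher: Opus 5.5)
Your proposal is correct and follows the paper's proof step for step. It uses the entropy-production identity $\frac{d}{dt}h(p_t)=\E_{p_t}[\nabla\cdot f]+\frac{D}{2}J(p_t)$ for $P_t$ and each $q_t(\cdot\mid\vy)$, cancels the drift terms via $P_t=\E_\vy q_t(\cdot\mid\vy)$, and invokes Lemma~\ref{lem:fisher_gap} for the metric-trace form; one harmless slip is that, since $\frac{d}{dt}h=-\int\partial_t p\,\ln p\,d\vx$, the intermediate term in Step~1 should be $+\int(\nabla\cdot(fp))\ln p\,d\vx$, though your final expression $\E_p[\nabla\cdot f]+\frac{D}{2}J(p)$ is right and the drift terms cancel anyway.
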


Eq~\eqref{eq:debruijn} gives the information-theoretic meaning of the metric trace. It is, up to the local diffusion scale, the instantaneous rate at which information about the clean endpoint is erased by the corruption process. Integrating this identity converts the local Fisher geometry of the denoising loss into a global information-flow law. The de Bruijn identity relates entropy production under Gaussian diffusion to Fisher information \cite{CoverThomas2006, GuoShamaiVerdu2005}, and we have shown that the same structure appears as the difference between the conditional and marginal Fisher informations and, through Theorem~\ref{thm:floor_fisher}, as the trace of the endpoint-posterior metric.

\begin{theorem}[General factorization of the floor] \label{thm:general_floor}
Under the hypotheses of Lemma~\ref{lem:debruijn}, for an arbitrary loss weighting $w(t)$,
\begin{equation} \label{eq:general_factorization}
\SDSM - \Sideal
\;=\; \frac12\int w(t)\,\E_{P_t}\big[\operatorname{tr} g\big]\,dt
\;=\; -\int \underbrace{\frac{w(t)}{D(t)}}_{\text{schedule}}\;
      \underbrace{\frac{dI(\vy;\vx_t)}{dt}}_{\text{data}}\,dt .
\end{equation}
In particular, if the loss weighting equals the diffusion coefficient, $w = D$, the floor is simply:
$\SDSM-\Sideal = I(\vy;\vx_{t_0}) - I(\vy;\vx_{t_1})$.
\end{theorem}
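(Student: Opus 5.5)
The proof combines Theorem~\ref{thm:floor_fisher}, which gives the Fisher-trace form of the gap, with Lemma~\ref{lem:debruijn}, which converts that trace into the rate of mutual-information loss. First I restate Theorem~\ref{thm:floor_fisher} with a general positive weighting $w(t)$ in place of $\gamma(t)$. The derivation of Eq.~\eqref{eq:gap_variance} never used the particular form of the weight. The expansion of the squared norm, the vanishing of the cross term via $\E_{\vy\mid\vx}[\nabla_\vx \ln p]=0$, and the identification of the posterior-score second moment with $g(\vx,t)$ all hold pointwise in $t$. Each time slice can therefore be multiplied by any nonnegative $w(t)$ and integrated over the chosen time window $[t_0,t_1]$. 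This yields the first equality in Eq.~\eqref{eq:general_factorization}, with $\SDSM$ and $\Sideal$ understood as the $w$-weighted objectives. The regularity hypotheses of Theorem~\ref{thm:floor_fisher} hold for the diffusion kernel under the hypotheses of Lemma~\ref{lem:debruijn}, since these already include smooth positive transition densities.

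Second, for each $t$ with $D(t)>0$, I solve Eq.~\eqref{eq:debruijn} for the metric trace:
\[
\E_{P_t}[\operatorname{tr} g(\cdot,t)] = -\frac{2}{D(t)}\frac{d}{dt}I(\vy;\vx_t).
\]
Substituting this into the first integral gives $\frac12\int w\cdot(-2/D)\,\dot I\,dt$, which is the second equality and exhibits the schedule/data factorization. Third, setting $w=D$ makes the prefactor equal to one. The fundamental theorem of calculus then gives $-\int_{t_0}^{t_1}\dot I\,dt = I(\vy;\vx_{t_0})-I(\vy;\vx_{t_1})$, which is nonnegative by the monotonicity in Lemma~\ref{lem:debruijn}.

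The main obstacle is the integration step. The map $t\mapsto I(\vy;\vx_t)$ must be absolutely continuous on $[t_0,t_1]$ with derivative given by Lemma~\ref{lem:debruijn} almost everywhere, and the integrals must be finite. Near the data end ($t_0\to 0$) the mutual information may be infinite, so I keep $t_0>0$ strictly. On $[t_0,t_1]$ the integrand $\E_{P_t}[\operatorname{tr} g]$ is nonnegative, so Tonelli justifies exchanging the $t$-integral with the expectations without extra domination. Absolute continuity follows from the de Bruijn identity holding on the open interval together with continuity of $I$ at the endpoints. That continuity I would obtain from smoothness of the heat-type kernel for $t>0$. I would also note that points where $D(t)=0$ are excluded or have measure zero, so that division by $D$ is legitimate.
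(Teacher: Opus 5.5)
Your proposal is correct and follows the same route as the paper: reweight the pointwise-in-$t$ conditional-variance identity of Theorem~\ref{thm:floor_fisher} by $w(t)$, substitute $\E_{P_t}[\operatorname{tr} g] = -\frac{2}{D(t)}\frac{d}{dt}I(\vy;\vx_t)$ from Lemma~\ref{lem:debruijn}, and telescope when $w=D$. Your extra care goes beyond what the paper spells out without changing the argument; this covers keeping $t_0>0$, using Tonelli for the nonnegative integrand, and securing absolute continuity of $t\mapsto I(\vy;\vx_t)$ from everywhere-differentiability plus monotonicity.
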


In making this statement, notice that once again, we made no assumptions about Gaussian, affine interpolant or signal-to-noise ratio. The data enters only through the information flow $-dI/dt$, and the schedule only through the ratio $w/D$. We have verified Eq.~\eqref{eq:debruijn} numerically for a nonlinear corruption drift by solving the Fokker--Planck equation directly (Appendix~\ref{app:numerics}).

\subsection{Schedule and weighting}
\label{subsec:schedule}

The word \emph{schedule} is used in the diffusion literature for several different objects, and since Theorem~\ref{thm:general_floor} assigns these objects distinct roles, we separate them before evaluating the floor. The primary one is the \emph{corruption schedule} $(\alpha_t,\sigma_t)$, which specifies how the data are gradually destroyed by noise; equivalently, one may specify the drift--diffusion pair $(\vb,\gamma)$ of the forward process or the SNR profile $\lambda_t$. This is the only one of the four that shapes the integrand itself, because it determines the curve that the corruption traces in the information manifold. A second object, the \emph{loss weighting} $w(t)$, multiplies the contribution of each time to the objective. A third, the \emph{training-time density}, is the distribution from which noise levels are drawn during optimization. Neither of these two enters the integrand and together they determine only how the integral is weighted and how it is sampled. The fourth, the \emph{sampling-time discretization}, is the step placement of a numerical solver integrating the generative dynamics and plays no role in the training objective at all, and we defer it to Sec.~\ref{subsec:third_order}. With these distinctions in place, we return to the bridge construction of Sec.~\ref{sec:bridge}, which singles out a distinguished combination of schedule and weighting:

\begin{lemma}[Bridge weighting is the SNR measure] \label{lem:schedule}
Write $\lambda_t := \alpha_t^2/\sigma_t^2$ for the signal-to-noise ratio. Among the marginal-preserving diffusions that realize a given affine interpolant, the bridge construction of Sec.~\ref{sec:bridge} singles out the memoryless one, that is, the unique state-independent coefficient for which the rescaled process $\vx_t/\alpha_t$ has independent increments, namely
\begin{equation} \label{eq:bridge_coefficient}
	\gamma(t) \;=\; 2\,\sigma_t^2\,\frac{d}{dt}\ln\frac{\alpha_t}{\sigma_t},
\end{equation}
and for this coefficient, for any affine interpolant,
\begin{equation} \label{eq:schedule_identity}
\gamma(t)\,\frac{\alpha_t^2}{\sigma_t^4} \;=\; \frac{d\lambda_t}{dt}.
\end{equation}
\end{lemma}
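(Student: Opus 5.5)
The plan is to make the affine interpolant explicit as a linear Gaussian SDE, read off the coefficient forced by marginal preservation together with the memoryless requirement, and then verify Eq.~\eqref{eq:schedule_identity} by a one-line logarithmic differentiation. Concretely, I take the interpolant $\vx_t = \alpha_t\vy + \sigma_t\boldsymbol\epsilon$ with $\boldsymbol\epsilon\sim\cN(0,\mathbf I)$. The class of diffusions considered is the state-independent, isotropic one of Sec.~\ref{subsec:gaussian_realization}, i.e.\ $d\vx_t = a(t)\vx_t\,dt + \sqrt{\gamma(t)}\,d\vW_t$; linear drift is the only form compatible with a Gaussian kernel of the type in Eq.~\eqref{eq:gaussian_encoder}.

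\textbf{Step 1: marginal preservation.} Conditioning on $\vy$ and matching the first two moments of the kernel $\cN(\alpha_t\vy,\sigma_t^2\mathbf I)$ gives two ODEs. The mean equation is $\dot\alpha_t = a(t)\alpha_t$, which forces $a = \frac{d}{dt}\ln\alpha_t$. The variance equation is $\frac{d}{dt}\sigma_t^2 = 2a(t)\sigma_t^2 + \gamma(t)$. Substituting $a$ yields
\[
\gamma = 2\sigma_t^2\Big(\frac{d}{dt}\ln\sigma_t - \frac{d}{dt}\ln\alpha_t\Big).
\]
This is Eq.~\eqref{eq:bridge_coefficient} up to the orientation of $t$. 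Following the footnote to Eq.~\eqref{eq:score_drift}, the bridge parameter $t$ runs in the generative direction, along which $\alpha_t/\sigma_t$ increases, so the sign flips and $\gamma = 2\sigma_t^2\frac{d}{dt}\ln(\alpha_t/\sigma_t) > 0$. I will state this orientation convention explicitly, because it is the only place a sign can go wrong.

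\textbf{Step 2: memorylessness and uniqueness.} Set $\vz_t = \vx_t/\alpha_t$. By It\^o's formula, $d\vz_t = (a - \dot\alpha_t/\alpha_t)\vz_t\,dt + (\sqrt{\gamma}/\alpha_t)\,d\vW_t$. With the choice of $a$ from Step 1 the drift vanishes, so $\vz_t$ is a time-changed Brownian motion and has independent increments. Conversely, any other state-independent choice leaves a drift term linear in $\vz_t$, which makes the increments correlated with the current state. So independence of increments pins $a$, and by Step 1 it then pins $\gamma$ as well.

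A further point is why this is the process singled out by the bridge construction. The construction of Sec.~\ref{sec:bridge} uses a reference with $\bpsi\equiv 1$ and a Dirac-pinned conditional field $\psi^{(\vy)} = q$. This requires the conditional kernel to be a genuine transition of the same Markov reference for every $\vy$, with no extra state-dependent (probability-flow-type) drift mixed in. I expect this step to be the main obstacle: turning ``singled out by the bridge'' into a precise statement is delicate. My fallback is to take the independent-increment characterization as the definition of ``memoryless'' and prove uniqueness only within that definition.

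\textbf{Step 3: the identity.} Since $\lambda_t = \alpha_t^2/\sigma_t^2$, we have $\frac{d\lambda_t}{dt} = 2\lambda_t\frac{d}{dt}\ln(\alpha_t/\sigma_t)$. Multiplying Eq.~\eqref{eq:bridge_coefficient} by $\alpha_t^2/\sigma_t^4$ gives $\gamma\,\alpha_t^2/\sigma_t^4 = 2(\alpha_t^2/\sigma_t^2)\frac{d}{dt}\ln(\alpha_t/\sigma_t) = \dot\lambda_t$, which is Eq.~\eqref{eq:schedule_identity}. Nothing in this computation depends on the particular affine interpolant.
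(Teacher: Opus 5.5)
Your proposal is correct and takes a different route from the paper. The paper does not derive Eq.~\eqref{eq:bridge_coefficient}; it takes that coefficient as given and checks Eq.~\eqref{eq:schedule_identity} interpolant by interpolant. It works the linear case $\alpha_t=t$, $\sigma_t=1-t$ explicitly in Appendix~\ref{app:numerics}, asserts that trigonometric and polynomial interpolants follow ``by the same computation,'' and adds a numerical check.

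Your Step~3 makes the case analysis unnecessary. Since $\dot\lambda_t = 2\lambda_t\,\frac{d}{dt}\ln(\alpha_t/\sigma_t)$ for any differentiable positive schedule, the identity follows algebraically from the form of the coefficient, and the per-family computations and numerics are subsumed. Your Steps~1--2 supply what the paper only asserts: a derivation of the coefficient from moment matching, with the orientation flip handled consistently with the paper's own example where $\alpha_t/\sigma_t$ increases in $t$, and an It\^o check that $\vx_t/\alpha_t$ is a time-changed Brownian motion.

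One caveat concerns uniqueness. In your class you match the full conditional kernel $\cN(\alpha_t\vy,\sigma_t^2\mathbf I)$, so the mean equation alone already forces $a=\dot\alpha_t/\alpha_t$. Memorylessness is then automatic rather than a selecting condition, and your Step~2 converse is vacuous. The ``among marginal-preserving diffusions'' claim only has content in the larger class that matches just the one-time marginals $P_t$. In noising time, take drift $\frac{\dot\alpha_t}{\alpha_t}\vx + \big(\frac{\dot\alpha_t}{\alpha_t}\sigma_t^2 - \sigma_t\dot\sigma_t + \frac{\epsilon(t)}{2}\big)\nabla\ln P_t$ with diffusion coefficient $\epsilon(t)$. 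Your same It\^o computation shows that $\vx_t/\alpha_t$ then has drift proportional to that bracket times $\nabla\ln P_t$. Independent increments require a non-random drift, and $\nabla\ln P_t$ cannot be constant for a positive integrable density, so the bracket must vanish; this gives $\epsilon=\gamma$ after the orientation flip. This upgrades your fallback to the uniqueness as stated. The phrase ``singled out by the bridge construction'' is interpretive and is not proved in the paper either, so you were right to flag it rather than claim it.
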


\noindent The statement can be verified by a direct computation for each interpolant family and is given in Appendix~\ref{app:numerics}. The lemma is also of independent interest beyond its use below, since it implies \emph{the memoryless schedule is precisely the one whose loss weighting is the SNR measure}. In other words, the bridge member of the marginal-preserving family is not one convenient choice among many but the canonical object singled out by the objective itself.

With the four notions separated, we can now factor the floor into a data contribution and a schedule contribution. Suppose the training loss uses a weight $w(t)$ in place of the bridge weight $\gamma$. Changing variables from $t$ to $\lambda$ with Lemma~\ref{lem:schedule} then gives
\begin{equation} \label{eq:floor_factorized}
\SDSM-\Sideal
\;=\; \frac12 \int \omega(\lambda)\,\mathrm{MMSE}(\lambda)\,d\lambda
\;=\; \frac12 \int \omega(\lambda)\,S(\lambda)\,d\log\lambda,
\qquad \omega := \frac{w}{\gamma},
\end{equation}
where $\mathrm{MMSE}(\lambda) := \E\,\|\vy - \E[\vy\mid \vx_\lambda]\|^2$ with $\vx_\lambda = \sqrt{\lambda}\,\vy + \vz$, $\vz\sim\mathcal N(0,\mathbf I)$, and we have defined the \emph{information spectrum} of the data:
\begin{equation} \label{eq:info_spectrum}
S(\lambda) \;:=\; \lambda\,\mathrm{MMSE}(\lambda) \;=\; 2\,\frac{dI(\vy;\vx_\lambda)}{d\log\lambda}.
\end{equation}
It is easy to see that the data dependency enters only through $S$, a scalar function of log-SNR determined by the distribution alone, while the schedule dependence enters only through $\omega$ and the endpoints, with the bridge weighting corresponding to $\omega \equiv 1$.

In this case, the spectrum has a transparent structure, shown in Fig.~\ref{fig:spectrum}. Each structural scale of the data contributes a bump, so that for a mixture with modes separated by $m$ and within-mode scale $s$, mode identity is resolved near $\lambda \sim m^{-2}$ and the within-mode directions near $\lambda\sim s^{-2}$, while at large $\lambda$ the spectrum saturates at the information dimension, $S \to d(\vy)$, which is Eq.~\eqref{eq:info_dimension} below. 

This structure suggests a criterion for allocating resolution. If one asks that each unit of schedule carry equal information, a natural choice, since the excess $\Sideal$ is what the model must learn while $S$ sets the scale of the noise it must learn through, then the allocation satisfies
\begin{equation} \label{eq:adaptive_schedule}
\frac{dt}{d\log\lambda} \;\propto\; S(\lambda).
\end{equation}
For featureless data the spectrum $S$ is flat, and the criterion reduces to uniform allocation in $\log\lambda$. This is qualitatively close to the commonly used cosine-like schedules over finite, clipped SNR ranges, though not identical to them, and related derivations of cosine-like schedules from the schedule-dependent part of the geometry appear in \cite{CosineFisherRao2025, Kong2026Schedule}. Real data, however, are not featureless. The spectrum carries a bump at each structural scale of the data, so a uniform-in-$\log\lambda$ allocation under-samples precisely the noise levels at which the data are most informative. By the discussion of Sec.~\ref{subsec:highsnr}, these are the neighbourhoods of the symmetry-breaking transitions. The criterion also has a direct information-theoretic meaning. Since $-dI/dt = dH(\vy\mid\vx_t)/dt$, Eq.~\eqref{eq:adaptive_schedule} coincides with the criterion of entropic time schedulers \cite{Stancevic2025Entropic}, which we have thus recovered from the loss decomposition rather than from an entropy argument. Equation~\eqref{eq:general_factorization} shows that this criterion is not merely plausible but forced by the structure of the objective, because the floor factorizes into a data flow and a schedule ratio, the total is pinned by the endpoints, and equal-information allocation is precisely the choice that makes the schedule factor uniform in the data's own coordinate. Finally, training-time noise allocation has itself been studied \cite{Ambrogioni2026Atoms}. A narrower variant that we flag as an open question, and do not pursue here, is to use $S(\lambda)$ as an \emph{online} importance density for Monte Carlo sampling of training noise levels, estimated from the denoiser's posterior variance as training proceeds.

\begin{figure}[t]
\centering
\includegraphics[width=\textwidth]{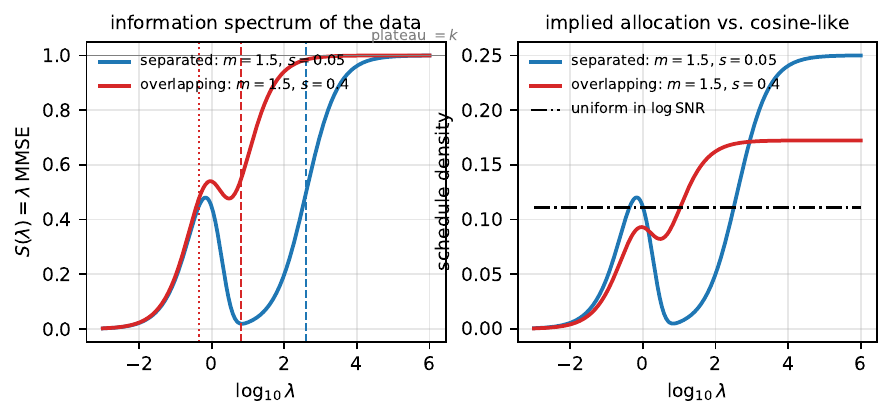}
\caption{Data/schedule separation of the loss floor. \textbf{Left:} the information spectrum $S(\lambda)=\lambda\,\mathrm{MMSE}(\lambda)$ for two mixtures. Dotted lines mark the mode-separation scale $\lambda=m^{-2}$, dashed lines the within-mode scale $\lambda=s^{-2}$; the plateau at large $\lambda$ is the intrinsic dimension $k=1$. \textbf{Right:} the schedule density implied by Eq.~\eqref{eq:adaptive_schedule}, compared with a uniform-in-$\log$SNR allocation. Uniform allocation is correct only for featureless data.}
\label{fig:spectrum}
\end{figure}

\subsection{Gaussian corruption and reparametrization invariance}
\label{subsec:gaussian}

The affine case is the specialization in which the information flow has a closed form. For affine kernels, Eq.~\eqref{eq:g_chi} makes $\operatorname{tr}g$ proportional to $\mathrm{MMSE}(\lambda)$.

\begin{theorem}[Loss floor $=$ information gained] \label{thm:floor_information}
With the bridge schedule of Lemma~\ref{lem:schedule},
\begin{equation} \label{eq:floor_information}
\SDSM - \Sideal
\;=\; \frac{1}{2}\int \mathrm{MMSE}(\lambda)\,d\lambda
\;=\; I(\vy; \vx_{\lambda_1}) - I(\vy;\vx_{\lambda_0})
\;=\; h(\vx_{\lambda_1}) - h(\vx_{\lambda_0}),
\end{equation}
where $I$ is mutual information and $h$ differential entropy.
\end{theorem}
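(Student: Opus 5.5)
The plan is to start from Theorem~\ref{thm:floor_fisher} with the bridge weighting $\gamma(t)$. I then evaluate the trace of the metric using the closed form of Proposition~\ref{prop:tangent}, change variables to $\lambda$ via Lemma~\ref{lem:schedule}, and finally apply the I--MMSE relation.

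\textbf{Step 1: the trace as an MMSE.} By Eq.~\eqref{eq:g_chi},
\[
\operatorname{tr} g(\vx,t)=\frac{\alpha_t^2}{\sigma_t^4}\,\operatorname{tr}\Cov[\vy\mid \vx_t=\vx].
\]
Averaging over $\vx\sim P_t$ gives
\[
\E_{P_t}[\operatorname{tr} g]=\frac{\alpha_t^2}{\sigma_t^4}\,\E\|\vy-\E[\vy\mid\vx_t]\|^2 .
\]
I next observe that $\vx_t/\sigma_t=\sqrt{\lambda_t}\,\vy+\vz$ is an invertible rescaling of $\vx_t$. Hence the conditional mean, and therefore the MMSE, are unchanged, so the expectation above equals $\mathrm{MMSE}(\lambda_t)$ in the normalization of Eq.~\eqref{eq:floor_factorized}.

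\textbf{Step 2: change of variables.} Multiplying by $\gamma(t)$ and using Eq.~\eqref{eq:schedule_identity},
\[
\gamma(t)\,\E_{P_t}[\operatorname{tr} g]=\mathrm{MMSE}(\lambda_t)\,\frac{d\lambda_t}{dt}.
\]
The $t$-integral of Theorem~\ref{thm:floor_fisher} therefore becomes $\tfrac12\int \mathrm{MMSE}(\lambda)\,d\lambda$ between the endpoint SNRs $\lambda_0<\lambda_1$. Orientation needs care here. In the generative parametrization $t$ runs toward the data, so $\lambda$ increases and $d\lambda/dt>0$; this is consistent with $\gamma>0$ in Eq.~\eqref{eq:bridge_coefficient}.

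\textbf{Step 3: I--MMSE.} The relation $dI(\vy;\sqrt\lambda\vy+\vz)/d\lambda=\tfrac12\mathrm{MMSE}(\lambda)$ of \cite{GuoShamaiVerdu2005} (equivalently Eq.~\eqref{eq:info_spectrum}) turns the integral into $I(\vy;\vx_{\lambda_1})-I(\vy;\vx_{\lambda_0})$. Alternatively, this follows from Theorem~\ref{thm:general_floor} with $w=D=\gamma$, read in the SNR coordinate. For the last equality I write $I(\vy;\vx_\lambda)=h(\vx_\lambda)-h(\vx_\lambda\mid\vy)$. Given $\vy$, $\vx_\lambda$ is $\sqrt\lambda\vy+\vz$, so $h(\vx_\lambda\mid\vy)=h(\vz)=\tfrac d2\ln(2\pi e)$ is independent of $\lambda$, and the conditional entropies cancel in the difference. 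In the normalized channel this yields $h(\vx_{\lambda_1})-h(\vx_{\lambda_0})$ directly. If $\vx_\lambda$ were instead taken as the unnormalized $\alpha\vy+\sigma\vz$, there would be an extra $d\ln\sigma$ term, so I will state explicitly that $\vx_\lambda$ denotes the unit-noise channel.

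\textbf{Main obstacle.} The main difficulty is bookkeeping of normalizations rather than any deep step. The points to track are the MMSE scale invariance of Step 1, the sign and orientation of the change of variables, and the entropy convention just noted. A secondary technical point is justifying the I--MMSE relation and the finiteness of $h(\vx_\lambda)$. The relation holds for any $\vy$ with finite second moment, and $h(\vx_\lambda)$ is finite for $\lambda<\infty$ because $\vx_\lambda$ is a Gaussian convolution. For this reason the endpoints must be kept at finite $\lambda_1$; the high-SNR divergence is deferred to Sec.~\ref{subsec:highsnr}.
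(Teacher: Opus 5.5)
Your proposal is correct and follows essentially the same route as the paper. Both arguments apply Theorem~\ref{thm:floor_fisher} with the Gaussian closed form \eqref{eq:g_chi}, use the schedule identity \eqref{eq:schedule_identity} to turn $\gamma(t)\,\alpha_t^2/\sigma_t^4\,dt$ into $d\lambda$, apply the I--MMSE relation, and cancel the $\lambda$-independent $h(\vx_\lambda\mid\vy)=h(\vz)$. Your extra bookkeeping is consistent with the paper's definition $\vx_\lambda=\sqrt{\lambda}\,\vy+\vz$ and makes explicit what the paper's short proof leaves implicit: the MMSE is invariant under $\vx_t\mapsto\vx_t/\sigma_t$, the orientation gives $d\lambda/dt>0$, and $h(\vx_\lambda)$ is taken in the unit-noise convention.
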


\noindent The proof combines Theorem~\ref{thm:floor_fisher} with Lemma~\ref{lem:schedule} and the I--MMSE relation of Guo, Shamai and Verd\'u \cite{GuoShamaiVerdu2005} (Appendix~\ref{app:infoflow}). Figure~\ref{fig:floor} verifies Eq.~\eqref{eq:floor_information} numerically on a Gaussian mixture (Appendix~\ref{app:numerics}).

\begin{figure}[t]
\centering
\includegraphics[width=\textwidth]{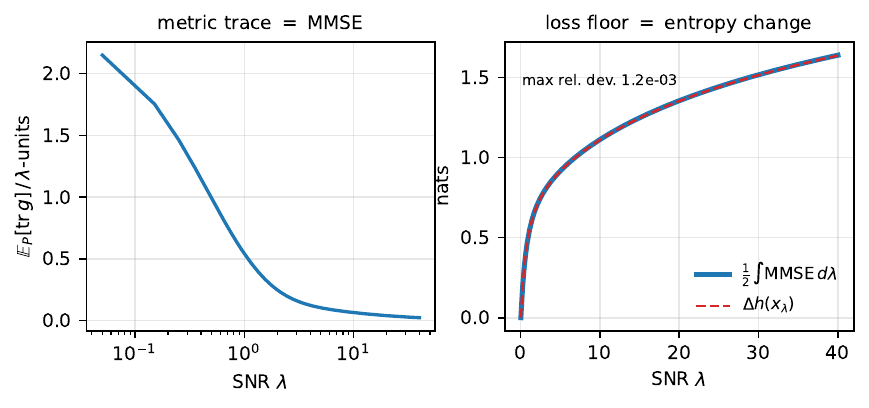}
\caption{The loss floor is an entropy. \textbf{Left:} the trace of the latent Fisher--Rao metric, equivalently the MMSE of endpoint estimation, along the flow. \textbf{Right:} the accumulated floor $\frac12\int\mathrm{MMSE}\,d\lambda$ (solid) against the differential-entropy change $\Delta h(\vx_\lambda)$ (dashed), verifying Theorem~\ref{thm:floor_information}. Two-mode Gaussian mixture, entropies by quadrature.}
\label{fig:floor}
\end{figure}

Because the right-hand side of Eq.~\eqref{eq:floor_information} is a difference of endpoint values, the loss floor depends on the noise schedule only through the endpoints of the SNR range, not on the path between them. This is the invariance theorem of variational diffusion models \cite{kingma2021variational}, obtained here as a integral rather than by direct computation. The geometric interpretation of this is that, a schedule is a \emph{parametrization} of a fixed curve in the information manifold, and the loss floor is a parametrization-invariant line integral along that curve, pinned by its endpoints.

Invariance is thus not an algebraic accident of the ELBO but the statement that a geometric quantity does not depend on how one traverses the curve, and this also delimits what schedule design can achieve. A schedule cannot change the floor, it can only redistribute the estimation error along the path. In that sense, the recent Fisher-geometric schedule derivations \cite{CosineFisherRao2025, Kong2026Schedule} and the present account are complementary.

\subsection{High-SNR asymptotics and information dimension}
\label{subsec:highsnr}

For continuous data, the mutual information $I(\vy;\vx_\lambda)$ diverges as $\lambda\to\infty$, because perfect observation of a continuous variable carries unbounded information. Theorem~\ref{thm:floor_information} then implies that the floor is infinite in the continuum limit and that it is finite in practice only because implementations truncate the SNR range. The growth of the metric trace in Eq.~\eqref{eq:g_chi} as $\sigma_t\to 0$ is therefore not a defect of the objective. It is the divergence of a mutual information, and the familiar $t_{\min}$ cut-off is more accurately understood as an information cut-off. The rate at which the floor diverges is itself informative. We first give the Gaussian computation, which is elementary, and then state the general asymptotic, which rests on a deeper result from information theory.

For \emph{Gaussian} $\vy$ with covariance eigenvalues $s_i^2$, the posterior is Gaussian as well, with variance $s_i^2/(1+\lambda s_i^2)$ in each eigendirection. Summing these variances gives
\begin{equation} \label{eq:mmse_gauss}
	\mathrm{MMSE}(\lambda) \;=\; \sum_i \frac{s_i^2}{1+\lambda s_i^2}
	\;\xrightarrow[\lambda\to\infty]{}\; \frac{k}{\lambda},
	\qquad
	\SDSM - \Sideal\Big|_{\lambda\le\Lambda} \sim \frac{k}{2}\log\Lambda,
\end{equation}
where $k$ is the number of non-zero eigenvalues. We stress that this computation uses Gaussianity of the prior in an essential way and does not follow from the support and covariance of the data alone.

The logarithmic rate itself is nevertheless general. What changes in the general case is the coefficient, which is an information-theoretic quantity rather than a linear-algebraic one. The high-SNR behaviour of $I(\vy;\vx_\lambda)$ in a Gaussian channel is governed by the R\'enyi information dimension of the input \cite{Renyi1959}, and correspondingly $\lambda\,\mathrm{MMSE}(\lambda)$ tends to the MMSE dimension of $\vy$ \cite{WuVerdu2010, WuVerdu2011}. Writing $d(\vy)$ for this dimension, the floor behaves as
\begin{equation} \label{eq:info_dimension}
	\SDSM - \Sideal\Big|_{\lambda\le\Lambda}
	\;\sim\; \frac{d(\vy)}{2}\,\log\Lambda .
\end{equation}
This single formula covers several regimes. For a distribution that is regular on a $k$-dimensional manifold, $d(\vy)=k$ and Eq.~\eqref{eq:info_dimension} reduces to Eq.~\eqref{eq:mmse_gauss}. For a mixture of discrete and continuous components, $d(\vy)$ equals the weight of the continuous part and need not be an integer. For singular distributions, the information dimension may fail to exist altogether. The divergence rate of the loss floor is therefore a property of the information dimension of the data, and it can be estimated from the denoiser's posterior variance alone, without access to a likelihood or to the Jacobian of the network. Figure~\ref{fig:dim} confirms Eq.~\eqref{eq:mmse_gauss} numerically, and the fitted slopes match $k/2$ to three decimals. We note that estimating intrinsic dimension from diffusion models is an established topic \cite{Pope2021ID, Stanczuk2024ID, Kamkari2024LID}. Our point here is not to propose a competitive estimator but to explain why the loss floor knows the dimension at all.

\begin{figure}[t]
	\centering
	\begin{minipage}[t]{0.46\textwidth}
		\centering
		\includegraphics[width=\textwidth]{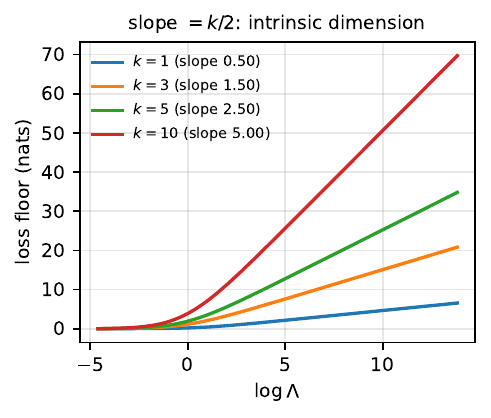}
		\caption{The floor diverges logarithmically at the data end, with slope $k/2$ for Gaussian data of rank $k$ in $\R^{10}$.}
		\label{fig:dim}
	\end{minipage}\hfill
	\begin{minipage}[t]{0.50\textwidth}
		\centering
		\includegraphics[width=\textwidth]{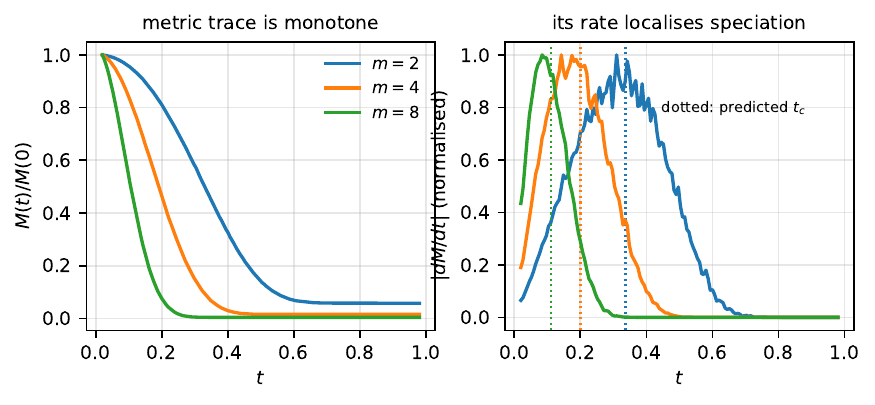}
		\caption{The metric trace is monotone (left), while its \emph{rate} localizes the symmetry-breaking transition (right), with dotted lines marking the predicted bimodality threshold.}
		\label{fig:speciation}
	\end{minipage}
\end{figure}

These asymptotics sit close to the symmetry-breaking transitions studied in \cite{Raya2023SpontaneousSB, Ambrogioni2023TheST, Biroli2024DynamicalRO}, and it is tempting to expect the floor itself to diverge at those transitions. It does not. To see why, write $M(t) := \E_{P_t}\operatorname{tr} \Cov[\vy\mid\vx,t]$ for the intrinsic part of the floor. This quantity decreases monotonically from its value at maximal ambiguity to the within-mode variance, so it remains finite throughout the transition. What localizes the transition is instead its rate $|dM/dt|$, which sharpens as the modes separate, as shown in Fig.~\ref{fig:speciation}. The geometric signature of speciation is thus the rate of change of the latent metric rather than the metric itself, a statement close in content to the entropy-rate criterion of \cite{EntropicSpeciation2026}.

\subsection{A thermodynamic interpretation}
\label{subsec:thermo}

We close this section with a thermodynamic interpretation of the decomposition. This interpretation is an analogy rather than an identity, and we will be careful about where it holds and where it breaks.

The starting point is the notion of entropy production in stochastic thermodynamics. There, entropy production is a path-space relative entropy between a process and its time reversal, $\sigma_\tau = \lim_{dt\to 0}D_{KL}(P\|P^\dagger)/dt$, where $P^\dagger$ denotes the time-reversed path measure \cite{Seifert2012, Ito2024GeometricThermo}. A related construction equips the space of path measures with a Fisher metric by interpolating between two velocity fields, and the induced line element is $\frac{dt}{2\mu T}\int\|\nu'-\nu\|^2 P$ \cite{Ito2024GeometricThermo}.

Our excess action Eq.~\eqref{eq:excess_action_integral} has exactly this form, with the diffusion coefficient $\gamma$ playing the role of $2\mu T$. It can therefore be read as the dissipation incurred by running the model's velocity field in place of the optimal one. Theorem~\ref{thm:floor_information} complements this picture, since it identifies the floor with $\Delta h$, an entropy change of the state along the reference corruption channel.

However, we need to be careful before combining these two observations. The quantity $\Delta h$ is a change of differential entropy of the state. Entropy production in stochastic thermodynamics is instead a path-space quantity with separate system and environment contributions. The two coincide only under the Gaussian-channel reading in the SNR parametrization. With this caveat in place, the decomposition resembles a dissipation-plus-entropy splitting,
\begin{equation} \label{eq:thermo_decomposition}
	\underbrace{\SDSM}_{\text{denoising loss}}
	\;=\;\underbrace{\Sideal}_{\text{dissipation}}
	\;+\;\underbrace{\Delta h}_{\text{entropy production}} .
\end{equation}

That the score-matching objective is itself an entropy-production functional has been established independently and in more detail by \cite{ScoreMatchingThermo2026}, whose time-asymmetry entropy production is proportional to the score-matching loss. Our contribution is complementary, since we identify the \emph{irreducible} term with a Fisher--Rao metric and with $\Delta h$. We also caution that Eq.~\eqref{eq:thermo_decomposition} is \emph{not} the excess/housekeeping decomposition of \cite{Ito2024GeometricThermo}. That decomposition splits entropy production by a geometric criterion, separating the Wasserstein speed of the density from the remainder, whereas ours splits the loss by dependence on the model. The two coincide in neither definition nor value.

\section{Consequences for Training and Sampling}
\label{sec:consequences}

The decomposition of Sec.~\ref{sec:infoflow} affects the two halves of the pipeline differently, because training and sampling probe different parts of the geometry. The training objective involves only the second cumulant of the denoising posterior, while the discretization error of the sampler also involves the third.

\subsection{Why raw training losses are not comparable}
\label{subsec:losscompare}

For the bridge weighting, the decomposition reads $\SDSM = \Sideal + \Delta h$, and the floor depends on the schedule through its endpoints and on the data through its entropy. Raw training losses are therefore not directly comparable across runs that use different SNR ranges, schedules, or loss weightings. Under fixed training conditions the floor is common to all runs and the conventional loss remains a valid comparator, but cross-configuration comparisons can be genuinely misleading. Figure~\ref{fig:consequences} (left, centre) gives an example on the analytic mixture of Appendix~\ref{app:numerics}. With two SNR ranges and two models of fixed known quality, the better model evaluated on the wider range reports a larger raw loss ($1.5455$) than the worse model on the narrower range ($1.1267$), purely because the wider range integrates more of the floor. Subtracting the floors ($1.0414$ and $1.5143$) gives excesses that order the models correctly on both ranges. We therefore recommend reporting the floor-subtracted excess alongside the conventional loss whenever models are compared across schedules or SNR ranges. The correction is cheap, since Theorem~\ref{thm:floor_fisher} expresses the floor as an expectation of the posterior covariance, which the denoising loop already has the ingredients to estimate.

\begin{figure}[t]
	\centering
	\includegraphics[width=\textwidth]{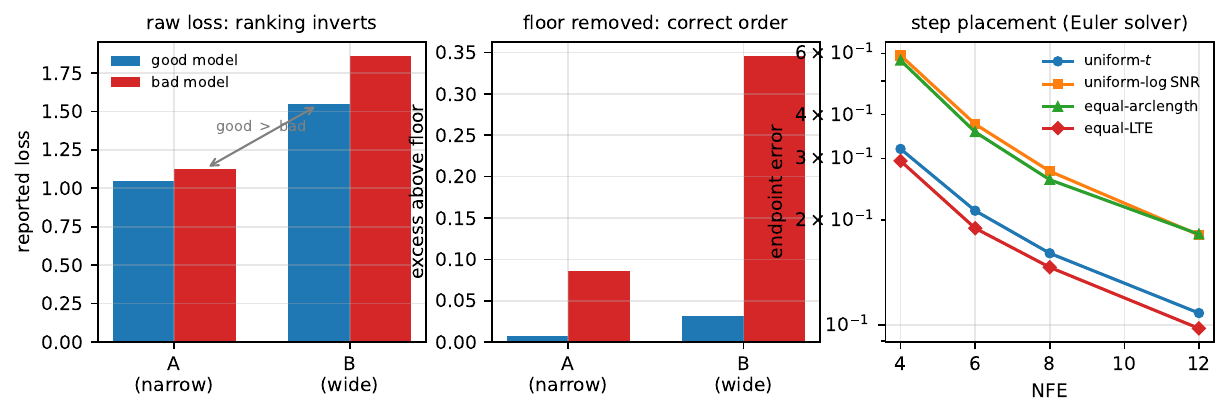}
	\caption{Consequences of the decomposition. \textbf{Left:} raw training loss for two models of known quality on two SNR ranges; the ranking inverts. \textbf{Centre:} after subtracting the floor, the excess orders the models correctly on both ranges. \textbf{Right:} step placement for a first-order solver on the analytic mixture, mean endpoint error against the exactly integrated probability-flow ODE.}
	\label{fig:consequences}
\end{figure}

\subsection{What schedule design can and cannot change}
\label{subsec:schedule_limits}

The decomposition also delimits schedule design. Because the total floor is pinned by the endpoints of the SNR range, a schedule can only redistribute where the estimation error is incurred, and the equal-information allocation discussed after Eq.~\eqref{eq:adaptive_schedule} recovers the criterion already used by entropic time schedulers \cite{Stancevic2025Entropic}. The one variant that appears unclaimed is the training-time analogue, namely sampling $t$ from a density proportional to $S(\lambda) = \operatorname{tr}\Cov[\vy\mid\vx_t]$ estimated online from the denoiser, which we record as an open question rather than a proposal.

\subsection{Second- versus third-order conditional statistics}
\label{subsec:third_order}

This subsection is exploratory and reports diagnostic observations on an analytic example rather than sampler design principles. Higher cumulants of the denoising posterior are absent from the training objective, because denoising score matching draws each noise level independently and never discretizes a trajectory. They appear as soon as one integrates the sampler. Writing the probability-flow trajectory in the mean-parameter coordinate $\eta = \nabla\Lambda = \E[\vy\mid\vx_t]$ with tangent $u = d\theta/d\lambda$,
\begin{equation} \label{eq:cumulant_expansion}
	\frac{d\eta}{d\lambda} \;=\; g\,u,
	\qquad
	\frac{d^2\eta}{d\lambda^2} \;=\; T[u,u] \;+\; g\,\frac{du}{d\lambda},
	\qquad T := \nabla^3\Lambda ,
\end{equation}
where $T$ is the third cumulant of the conditional endpoint law. The leading truncation term of a first-order step is therefore $d^2\eta/d\lambda^2$, which depends on the third cumulant through $T[u,u]$ in addition to the second-order term $g\,du/d\lambda$. Measured along the exactly integrated trajectory on the analytic mixture (Appendix~\ref{app:numerics}), the contribution of $T$ is strongly localized in time and peaks where the modes separate, with a median share of $0.47$ of $\|d^2\eta/d\lambda^2\|$ across trajectories. Accordingly, Figure~\ref{fig:consequences} (right) shows that equalizing the true leading error $\|d^2\eta/d\lambda^2\|\Delta\lambda^2$ outperforms equalizing Fisher--Rao arclength at every step count tested. We report this as a diagnostic rather than a sampler proposal, because the comparison uses an explicit Euler solver while standard baselines such as DDIM \cite{Song2020DDIM} integrate the linear part exactly, and the decisive experiment with an exponential integrator at matched order remains to be carried out.

\section{Discussion}
\label{sec:discussion}

The identity behind every result in this paper is elementary. Denoising score matching regresses onto a random target whose posterior mean is the marginal score, and the fluctuation of that target is the score of the conditional endpoint family. Its second moment is a Fisher information by definition, which is why the training loss contains a term no model can reduce and why that term is the metric of the latent manifold. Once the corruption is a diffusion, the same quantity measures the rate at which the noisy state forgets the data, so the floor accumulates a mutual information. The geometry, the information theory, and the objective are three descriptions of one number.

\subsection{Scope and relation to prior work}
\label{subsec:scope}

The primary results are the conditional-variance identity of Theorem~\ref{thm:floor_fisher} and its factorization into an information flow and a schedule weight in Theorem~\ref{thm:general_floor}. The ingredients of Sec.~\ref{sec:fisher} are classical, namely the second variation of a log-partition functional, the theory of sufficient reductions, and the induced geometry of curved exponential families \cite{Chentsov1982, Efron1975, Amari1982}, and we claim novelty only for their assembly into a derivation of the latent metric from the bridge variational principle. Several nearby lines of work address optimization problems that are easily conflated with ours, including sampler discretization \cite{Sabour2024AYS, Williams2024ScoreOptimal, Stancevic2025Entropic}, the training-time noise density \cite{Ambrogioni2026Atoms}, loss weighting \cite{kingma2021variational}, and time reparametrization \cite{CosineFisherRao2025, Kong2026Schedule}. Our contribution to this thread is not a new member of any of these families but the factorization Eq.~\eqref{eq:floor_factorized}, which delimits what any of them can affect. Two further boundaries are worth marking. The reading of the score-matching objective as an entropy-production functional is developed independently and in more detail by \cite{ScoreMatchingThermo2026}, while our contribution is the identification of the irreducible term with a Fisher--Rao metric and with an entropy change. Estimating intrinsic dimension from a trained diffusion model is established \cite{Pope2021ID, Stanczuk2024ID, Kamkari2024LID}, and Eq.~\eqref{eq:info_dimension} explains why the floor carries that information rather than proposing a new estimator.

\subsection{The discrete case of masked diffusion}
\label{subsec:discrete}

The decomposition also has a definite consequence beyond the continuous setting, which we believe is among the most useful observations of this work. The masked-diffusion objective is a cross-entropy, and a cross-entropy is the Bregman divergence generated by the negative entropy. The same algebra that produced the conditional-variance split Eq.~\eqref{eq:mse_split} applies to any such loss. The expected divergence to a random target separates into a model-dependent term evaluated at the posterior mean of the target and an irreducible remainder known as the Bregman information of the target \cite{Savage1971, Banerjee2005Clustering, Banerjee2005Optimality}. For the cross-entropy this remainder is a conditional entropy. Writing $m_t$ for the masking probability, decreasing from $1$ to $0$, and $U$ for the set of unmasked tokens, a still-masked token is revealed over an infinitesimal step with probability $-\dot m_t/m_t$ at a cost $H(y_i \mid y_U)$. With a perfect model the residual loss is therefore
\begin{equation} \label{eq:discrete_floor}
	\mathrm{floor} \;=\; \int_0^1 \frac{dm}{m}\; \E_U\Big[ \sum_{i \notin U} H\big(y_i \mid y_U\big) \Big],
\end{equation}
where each token is unmasked independently with probability $1-m$. Two properties follow immediately. Substituting $u = m_t$ in the time integral removes the schedule, so the floor depends only on the endpoint masking rates and not on the shape of $m_t$. Evaluating Eq.~\eqref{eq:discrete_floor} then gives the entropy $H(y_1,\ldots,y_n)$ of the data. We have verified both numerically for correlated tokens (Appendix~\ref{app:numerics}).

This is the discrete counterpart of the continuous statement. There the floor is an accumulated mutual information fixed by the endpoints of the SNR range, and here it is the data entropy fixed by the endpoint masking rates. In both cases the schedule redistributes where the cost is incurred without changing the total. This also accounts for the known schedule invariance of the masked-diffusion evidence bound \cite{Sahoo2024MDLM, Shi2024MD4}, because the objective is a Bregman divergence whose irreducible part telescopes along the filtration. We leave open the harder question of a discrete analogue of the cone Eq.~\eqref{eq:cone} and of the latent metric itself, which would require the $\alpha$-geometry of the simplex rather than the argument given here.

\subsection{Limitations}
\label{subsec:limitations}

Several limitations should be kept in view. The closed-form evaluation of the floor as $\Delta I$ assumes affine Gaussian corruption, and Theorem~\ref{thm:general_floor} covers general corruption diffusions only at the price of leaving the information flow implicit. The high-SNR rate is governed by the information dimension, which need not exist for singular data distributions. The replacement of the on-policy weighting by the simulation-free one in Sec.~\ref{subsec:excess} leaves the functional minimizer unchanged, but the projection onto a restricted parametric model does depend on the weighting measure, so our statements concern the value of the objective rather than the parametric projection. Our experiments are analytic or low-dimensional by design and establish identities rather than performance. Finally, the floor is only as estimable as the posterior covariance, which is hardest to estimate at the low-noise end where the floor diverges.

\subsection{Concluding remarks}
\label{subsec:conclusion}

We have shown that the denoising score-matching objective decomposes exactly into a model-dependent estimation error and a model-independent floor, and that the floor is the trace of the Fisher--Rao metric of the conditional endpoint family integrated along the flow. For corruption diffusions the floor factorizes into a data-dependent information flow and a schedule-dependent weight, and for affine Gaussian corruption it evaluates to the mutual information accumulated between data and noisy state. The practical consequence we would emphasize is the least glamorous one. A reported diffusion loss mixes model fit with a data- and schedule-dependent constant, and the two should be separated before the number is used to compare training runs.

\section*{Ethics Statement}

In the preparation of this manuscript we used large language models for language improvement, editing, and literature search and summarization. These tools were not used to produce research ideas, derivations, or results, and every suggested change was reviewed and verified by the authors before inclusion. We take full responsibility for the content of this paper, and any mistakes that remain in the text are entirely ours.

\section*{Acknowledgements}

This work was carried out as an independent research project and received no specific grant from any funding agency in the public, commercial, or not-for-profit sectors. It was conducted outside the authors' official duties, and the views expressed are those of the authors alone, not of their employers.

\bibliographystyle{unsrt}
\bibliography{refs}

\newpage

\appendix
\section*{Appendix}
\section{Variational Derivation of the Endpoint-Tilted Path Measure}
\label{app:schrodinger_bridge_variation}

\renewcommand{\theequation}{\thesection.\arabic{equation}}
\setcounter{equation}{0}

This appendix collects the derivations underlying the variational formulation of the Schr\"odinger bridge in Sec.~\ref{sec:bridge}. We first solve the constrained minimization that yields the factorized optimal path measure, and then derive the recursion relations for the auxiliary fields together with their continuum limit. The remaining subsections expand the tilted kernel at short times, construct the conditional fields that connect the bridge to the score-matching objective, and compute the short-time KL divergence between diffusion kernels that underlies the excess action.

\subsection{Solution of the bridge problem}
Consider the time interval $[0,T]$ discretized into $N$ equal segments of length $\Dt = T/N$, and let $\vx_k$ denote the state at time $t_k = k \Dt$. The joint path measure under the reference process is
\begin{equation}
	\cG(\vx_{0:N}) = P_0(\vx_0) \prod_{k=0}^{N-1} G(\vx_{k+1} \mid \vx_k),
\end{equation}
and under the candidate process,
\begin{equation}
	\cH(\vx_{0:N}) = P_0(\vx_0) H(\vx_{1:N} \mid \vx_0) = P_0(\vx_0) \prod_{k=0}^{N-1} H(\vx_{k+1} \mid \vx_k).
\end{equation}
The Schr\"odinger bridge is formulated directly as the minimization of relative
entropy on path space,
\begin{equation} \label{eq:app_pathkl}
	\begin{split}
		\cHstar &= \argmin_{\cH:\; \cH_0 = P_0,\ \cH_N = P_T}\ D^{\rm path}_{KL}(\cH\|\cG),\\
		D^{\rm path}_{KL}(\cH\|\cG) &= \int d\vx_{0:N}\,\cH(\vx_{0:N})\ln\frac{\cH(\vx_{0:N})}{\cG(\vx_{0:N})},
	\end{split}
\end{equation}
with the endpoint marginals imposed as constraints. No equivalence between a
marginal-propagator divergence and the path-space divergence is required, and we
do not assert one. Equation~\eqref{eq:app_pathkl} is the definition of the problem
in the sense of \cite{Follmer1988, Leonard2014}, and the derivation below is a
variation of that functional alone.

To enforce the marginal constraints at the initial and final times, we introduce Lagrange multiplier functions $\lambda_0(\vx_0)$ and $\lambda_N(\vx_N)$ and form the augmented functional
\begin{align}
	\begin{split}
		\mathcal F[\cH;\lambda_0,\lambda_N] = D^{\text{path}}_{KL}(\cH \| \cG) + \int d\vy\, \lambda_0(\vy) \left[\int d\vx_{1:N}\, \cH(\vy,\vx_{1:N}) - P_0(\vy) \right] \\ + \int d\vy\, \lambda_N(\vy) \left[\int d\vx_{0:N-1}\,\cH(\vx_{0:N-1},\vy) - P_T(\vy)\right].
	\end{split}
\end{align}
Varying $\mathcal{F}$ with respect to $\cH$, treated as a functional derivative in the space of probability measures, and setting the variation to zero gives
\begin{equation}
	\ln \frac{\cHstar}{\cG} + 1 + \lambda_0(\vx_0) + \lambda_N(\vx_N) = 0,
\end{equation}
which rearranges to $\cHstar(\vx_{0:N}) = f(\vx_0)\, \cG(\vx_{0:N})\, g(\vx_N)$, where the endpoint functions are identified as
\begin{subequations}
	\begin{align}
		f(\vx_0) &= e^{-\frac{1}{2} - c - \lambda_0(\vx_0)}, \\
		g(\vx_N) &= e^{-\frac{1}{2} + c - \lambda_N(\vx_N)}.
	\end{align}
\end{subequations}
The constant $c$ reflects a global symmetry, since the transformation $f \rightarrow e^\alpha f$, $g \rightarrow e^{-\alpha} g$ leaves the product $fg$ invariant.

To fix the multipliers explicitly, we enforce the endpoint marginals. The initial marginal of $\cHstar$ is
\begin{equation}
	P_0(\vx_0) = \int d\vx_{1:N} \, \cHstar(\vx_{0:N}) = f(\vx_0) P_0(\vx_0) \int d\vx_{1:N} \, \cG(\vx_{1:N}\mid \vx_0)\, g(\vx_N),
\end{equation}
which implies
\begin{equation}
	f(\vx_0) = \frac{1}{\psi_0(\vx_0)}, \qquad \psi_0(\vx_0) := \int d\vx_{1:N} \, \cG(\vx_{1:N}\mid\vx_0)\, g(\vx_N).
\end{equation}
Similarly, enforcing the final marginal gives
\begin{equation}
	P_N(\vx_N) = g(\vx_N) \int d\vx_{0:N-1} \, \cHstar(\vx_{0:N-1}, \vx_N)
	= g(\vx_N)\,\bpsi_N(\vx_N),
\end{equation}
where the integral over the preceding slices reproduces the forward field $\bpsi_N$ defined below. These relations close the system, confirm the consistency of the forward and backward factorization, and recover the product identity $P_k(\vx_k) = \psi_k(\vx_k)\,\bpsi_k(\vx_k)$ at the endpoints.

\subsection{Recursion relations and the continuum limit}
We now derive the recursion relations for the auxiliary fields $\psi_k$ and $\bpsi_k$ that characterize the bridge at intermediate times. The backward field is the propagation of the terminal tilt,
\begin{equation} \label{eq:psi_def}
	\psi_k(\vx_k) = \int d\vx_{k+1:N} \, g(\vx_N) \prod_{s=k}^{N-1} G(\vx_{s+1} \mid \vx_s),
\end{equation}
and by the Chapman--Kolmogorov equation it satisfies the backward recursion
\begin{equation} \label{eq:chi-recursion}
	\psi_{k-1}(\vx_{k-1}) = \int d\vx_k \, \psi_k(\vx_k) \, G(\vx_k \mid \vx_{k-1}),
\end{equation}
with terminal condition $\psi_N(\vx_N) = g(\vx_N)$. Similarly, the forward field is
\begin{equation} \label{eq:bpsi_def}
	\bpsi_k(\vx_k) = \int d\vx_{0:k-1} \, f(\vx_0) P_0(\vx_0) \prod_{s=0}^{k-1} G(\vx_{s+1} \mid \vx_s),
\end{equation}
and satisfies the forward recursion
\begin{equation} \label{eq:chihat-recursion}
	\bpsi_{k+1}(\vx_{k+1}) = \int d\vx_k \, G(\vx_{k+1} \mid \vx_k) \, \bpsi_k(\vx_k),
\end{equation}
with initial condition $\bpsi_0(\vx_0) = f(\vx_0) P_0(\vx_0)$. The product identity $P_k(\vx_k) = \psi_k(\vx_k)\, \bpsi_k(\vx_k)$ follows directly from the factorized joint measure by integrating out all other time slices.

To obtain the continuum limit of these recursions, consider the backward recursion Eq.~\eqref{eq:chi-recursion}. As $\Dt \rightarrow 0$, the transition kernel $G(\vx_k \mid \vx_{k-1})$ is the short-time propagator of a diffusion with drift $\vv$ and diffusion coefficient $\gamma(t)$, and the standard Kramers--Moyal expansion \cite{Risken1996} yields the adjoint (backward) Kolmogorov equation
\begin{equation}
	\partial_t \psi + \vv \cdot \nabla \psi + \frac{\gamma(t)}{2} \nabla^2 \psi - V_G \psi = 0,
\end{equation}
with terminal condition $\psi(\vx, T) = g(\vx)$. The killing term $V_G$ appears if the reference process includes a potential allowing for killing, so that probability is not conserved. Similarly, the forward recursion Eq.~\eqref{eq:chihat-recursion} yields the forward Fokker--Planck equation
\begin{equation}
	\partial_t \bpsi + \nabla \cdot (\vv \bpsi) - \frac{\gamma(t)}{2} \nabla^2 \bpsi + V_G \bpsi = 0,
\end{equation}
with initial condition $\bpsi(\vx, 0) = f(\vx) P_0(\vx)$. These two equations are dual under time reversal and form the continuous-time backbone of the Schr\"odinger bridge.

\subsection{Short-time expansion of the tilted kernel}
The optimal transition kernel is
\begin{equation}  \label{eq:one_step_tilted_kernel}
	\Hstar(\vx_{k+1} \mid \vx_k) = \frac{\psi_{k+1}(\vx_{k+1})}{\psi_k(\vx_k)}\, G(\vx_{k+1} \mid \vx_k).
\end{equation}
To extract its short-time form, begin with the short-time reference kernel
\begin{equation} \label{eq:one_step_ref_kernel}
	G(\vx_{k+1} \mid \vx_k) = \frac{1}{(2\pi\gamma_k \Delta t)^{d/2}} \exp\left[ -\frac{\|\Delta\vx - \vv(\vx_k)\Delta t\|^2}{2\gamma_k \Delta t} - V_G(\vx_k)\Delta t \right],
\end{equation}
where $\Dx = \vx_{k+1} - \vx_k$ and $\gamma_k = \gamma(t_k)$. Under the reference measure the increment scales as $\Dx \sim \sqrt{\Dt}$, so the log-ratio of the $\psi$ fields must be expanded to first order in $\Dt$ and second order in $\Dx$,
\begin{equation}
	\begin{split}
		\ln \frac{\psi_{k+1}(\vx_{k+1})}{\psi_k(\vx_k)} ={}& \Delta\vx \cdot \nabla \ln \psi_k(\vx_k) + \frac{1}{2} \Delta\vx^a \Delta\vx^b \, \partial_a \partial_b \ln \psi_k(\vx_k) \\
		& + \Delta t \, \partial_t \ln \psi_k(\vx_k) + \mathcal O(\Delta t^{3/2}).
	\end{split}
\end{equation}
Substituting this expansion into Eq.~\eqref{eq:one_step_tilted_kernel} and combining with the Gaussian exponent of Eq.~\eqref{eq:one_step_ref_kernel}, the linear term in $\Dx$ completes the square against the reference exponent and produces the effective drift $\vv_* = \vv + \gamma_k \nabla \ln \psi_k$. The remaining terms of order $\Dt$ combine to give the residuals required by normalization. The tilted kernel is therefore, to leading order, a Gaussian diffusion with the same noise coefficient $\gamma_k$ and the effective drift
\begin{equation} \label{eq:effective_drift}
	\vv_{*}(\vx, t) = \vv(\vx, t) + \gamma(t)\, \nabla \ln \psi(\vx, t)
\end{equation}
in the continuum limit. Equation~\eqref{eq:effective_drift} is the central dynamical result of the Schr\"odinger bridge, since the optimal process is obtained by adding the gradient of the log-backward field to the reference drift.

\subsection{Conditional field construction and the posterior mixture}
In the score-matching sector of Sec.~\ref{sec:bridge}, we set $V_G = -\nabla\cdot\vv$ so that the forward field becomes constant, $\bpsi = 1$. The marginal is therefore $P_t = \psi_t$ and the effective drift is $\vv_* = \vv + \gamma \nabla \ln P_t$. It remains to express $\nabla \ln P_t$ in terms of the training data.

Because the equation governing $\psi$ is linear, its solution can be obtained by superposition. For a single data point $\vy$, define the conditional field
\begin{equation} \label{eq:conditional_chi}
	\psi^{(\vy)}(\vx, t) = G(\vy, T \mid \vx, t),
\end{equation}
the transition kernel from $(\vx, t)$ to the terminal point $\vy$ at time $T$. This field satisfies the same backward Kolmogorov equation as $\psi$, with terminal condition $\psi^{(\vy)}(\vx, T) = \delta^d(\vx - \vy)$, and by linearity the full field is the average over the data distribution,
\begin{equation}
	\psi(\vx, t) = \int d\vy \, P_{\rm data}(\vy) \, \psi^{(\vy)}(\vx, t).
\end{equation}

For the affine Gaussian diffusions used in practice, namely variance-preserving or variance-exploding schedules, the reference kernel is Gaussian, and reversing time via $\tau = T - t$ to pass from the generative orientation to the noising one gives
\begin{equation}
	\psi^{(\vy)}(\vx, t) = q_\tau(\vx \mid \vy) = \mathcal N\!\left(\vx; \alpha_\tau \vy,\; \sigma_\tau^2 \mathbf I\right),
\end{equation}
where $\alpha_\tau$ and $\sigma_\tau$ are the signal and noise schedules of the noising process. Suppressing the orientation change in the notation from here on, the marginal is consequently
\begin{equation}
	P_t(\vx) = \int d\vy \, P_{\rm data}(\vy) \, q_t(\vx \mid \vy),
\end{equation}
which is precisely the forward noising marginal.

The conditional bridge kernel for a fixed data point $\vy$ follows from the tilted kernel formula,
\begin{equation}
	H^{(\vy)}(\vx_{k+1} \mid \vx_k) = \frac{\psi^{(\vy)}_{k+1}(\vx_{k+1})}{\psi^{(\vy)}_{k}(\vx_k)}\, G(\vx_{k+1} \mid \vx_k),
\end{equation}
and by the Markov property of the noising chain this simplifies to the posterior of the noising chain conditioned on the clean sample,
\begin{equation}
	H^{(\vy)}(\vx_{k+1} \mid \vx_k) = q(\vx_{k+1} \mid \vx_k, \vy) = \frac{q(\vx_k \mid \vx_{k+1}) \, q(\vx_{k+1} \mid \vy)}{q(\vx_k \mid \vy)}.
\end{equation}
Averaging over the data distribution with the posterior weight
\begin{equation}
	p(\vy \mid \vx_k) = \frac{P_{\rm data}(\vy) \, q(\vx_k \mid \vy)}{P_k(\vx_k)}
\end{equation}
yields the full optimal reverse kernel,
\begin{equation}
	\Hstar(\vx_{k+1} \mid \vx_k) = \int d\vy \, p(\vy \mid \vx_k) \, q(\vx_{k+1} \mid \vx_k, \vy).
\end{equation}
This is the posterior-weighted mixture representation of the optimal bridge. The generative process averages over the posterior of the clean data point given the current noisy observation and then steps according to the conditional noising posterior.

Finally, the same construction expresses the marginal score as a posterior expectation,
\begin{equation}
	\nabla \ln P_t(\vx) = \frac{\int d\vy \, P_{\rm data}(\vy) \, q_t(\vx \mid \vy) \, \nabla \ln q_t(\vx \mid \vy)}{P_t(\vx)} = \mathbb{E}_{p(\vy \mid \vx)} \left[ \nabla \ln q_t(\vx \mid \vy) \right].
\end{equation}
This is the identity underlying the simulation-free score-matching objective, in which the unknown marginal score is replaced by a tractable posterior expectation estimated in practice by sampling a clean data point and adding noise.

\subsection{Short-time KL divergence between diffusion kernels}
\label{app:short-time-expansion}

We derive the per-step KL divergence between two diffusion processes that share the same diffusion coefficient but differ in drift, the computation underlying the excess action of Sec.~\ref{subsec:excess}. Consider two transition kernels over a small interval $\Delta t$ with drifts $\vv_1(\vx_k)$, $\vv_2(\vx_k)$ and common diffusion coefficient $\gamma_k$,
\begin{equation}
	H_i(\vx_{k+1} \mid \vx_k) = \frac{1}{(2\pi\gamma_k \Delta t)^{d/2}} \exp\left[ -\frac{\|\Delta\vx - \vv_i(\vx_k)\Delta t\|^2}{2\gamma_k \Delta t} \right], \quad i = 1, 2,
\end{equation}
where $\Dx = \vx_{k+1} - \vx_k$. The drifts are evaluated at the initial point $\vx_k$, and higher-order terms in the drift do not affect the leading-order divergence. The per-step KL divergence is
\begin{equation}
	\begin{split}
		D_{KL}\bigl(H^{(1)}(\cdot\mid\vx_k) \,\|\, H^{(2)}(\cdot\mid\vx_k)\bigr)
		= \int \dx_k \, \dx_{k+1} \, P^{(1)}_k (\vx_k) \, H^{(1)}(\vx_{k+1}\mid\vx_k) & \\
		\times \ln \frac{H^{(1)}(\vx_{k+1}\mid\vx_k)}{H^{(2)}(\vx_{k+1}\mid\vx_k)} & ,
	\end{split}
	\label{eq:per_step_kl}
\end{equation}
where $P^{(1)}_k$ is the marginal under the first process, or any measure with the same support, since the leading-order result is independent of the weighting. The log-ratio simplifies because the normalization factors cancel,
\begin{equation}
	\ln \frac{H_1}{H_2} = \frac{1}{\gamma_k} \Delta\vx \cdot (\vv_1 - \vv_2) + \frac{\Delta t}{2\gamma_k} \left( \|\vv_2\|^2 - \|\vv_1\|^2 \right).
\end{equation}
Under $H_1$ the increment is distributed as
\begin{equation}
	\Delta\vx = \vv_1(\vx_k) \Delta t + \sqrt{\gamma_k \Delta t} \, \boldsymbol{\xi}, \qquad \boldsymbol{\xi} \sim \mathcal N(0, \mathbf I),
\end{equation}
and substituting into the log-ratio and averaging over $\boldsymbol{\xi}$ gives
\begin{align}
	\E_{H_1}\left[ \ln \frac{H_1}{H_2} \right] &= \frac{\Delta t}{\gamma_k}\, \vv_1 \cdot (\vv_1 - \vv_2) + \frac{\Delta t}{2\gamma_k} \left( \|\vv_2\|^2 - \|\vv_1\|^2 \right) \nonumber \\ &= \frac{\Delta t}{2\gamma_k} \|\vv_1 - \vv_2\|^2,
\end{align}
where the term proportional to $\E[\boldsymbol{\xi}]$ vanishes. This is exact for the expectation of the log-ratio under $H_1$, because the log-ratio is linear in $\Delta\vx$ plus a deterministic constant. No higher-order corrections in $\Dt$ arise beyond the time-dependence of the drifts over the interval, which contributes $\mathcal O(\Dt^{3/2})$ or higher. Therefore
\begin{equation}
	D_{KL}\bigl(H_1(\cdot\mid \vx_k) \,\|\, H_2(\cdot \mid \vx_k)\bigr) = \frac{\Delta t}{2\gamma_k} \|\vv_1(\vx_k) - \vv_2(\vx_k)\|^2 + \mathcal O(\Delta t^{3/2}).
\end{equation}
Integrating over the initial state $\vx_k$ with weight $P^{(1)}_k(\vx_k)$ and taking the continuum limit yields the integral form used in the main text,
\begin{equation}
	\Delta \mathcal S = \frac{1}{2} \int_0^T \frac{dt}{\gamma(t)} \, \E_{\vx \sim P_t} \left[ \|\vv_H(\vx, t) - \vv_*(\vx, t)\|^2 \right] + \mathcal O(\Delta t),
\end{equation}
where the error term vanishes as the discretization is refined. This establishes the connection between the path-space KL divergence and the squared drift difference underlying the score-matching objective.

\section{Details of the Latent-Geometry Construction}
\label{app:latent}

\renewcommand{\theequation}{\thesection.\arabic{equation}}
\setcounter{equation}{0}

This appendix supplies the derivations for Sec.~\ref{sec:fisher}. We compute the ambient second variation and establish the endpoint measurability of the bridge tangent space. We then show that exact path conditioning is singular, work out the exponential-family form of the Gaussian corruption, and carry out the pullback with its endpoint split. The final subsections give the Gaussian evaluation and the radial information identity.

\subsection{The ambient second variation}

Let $\cG$ be a $\sigma$-finite reference path measure and
\begin{equation}
	\Lambda[u,v] = \ln \int \dx_{0:N}\, \cG(\vx_{0:N})\, e^{u(\vx_0)+v(\vx_N)},
	\qquad
	\cD = \{(u,v):\Lambda[u,v]<\infty\}.
\end{equation}
Write $\rho^{(u,v)}$ for the joint law of $(\vx_0,\vx_N)$ under $\cH^{(u,v)}$ of Eq.~\eqref{eq:tilt_family}, and let $S = \delta u(\vx_0)+\delta v(\vx_N)$ for an admissible direction $(\delta u,\delta v)$.

Consider $\lambda \mapsto \Lambda[u+\lambda\,\delta u,\ v+\lambda\,\delta v]$. By construction
\begin{equation}
	\Lambda[u+\lambda\delta u, v+\lambda\delta v] = \Lambda[u,v] + \ln \E_{\rho^{(u,v)}}\big[e^{\lambda S}\big],
\end{equation}
since tilting by $\lambda S$ reweights the endpoint law by exactly $e^{\lambda S}$. The second term is the cumulant-generating function of $S$ under $\rho^{(u,v)}$, and differentiating twice at $\lambda=0$ gives
\begin{equation}
	\frac{d}{d\lambda}\Big|_0 = \E_\rho[S],
	\qquad
	\frac{d^2}{d\lambda^2}\Big|_0 = \Var_\rho[S],
\end{equation}
which is Lemma~\ref{lem:ambient_fisher}. Differentiation under the integral is justified for $(u,v)$ interior to $\cD$ by the standard exponential-integrability argument for exponential families \cite{LDBrown1986, BarndorffNielsen1978}, because on the interior $\E_\cG[e^{(u+\lambda\delta u)(\vx_0)+(v+\lambda\delta v)(\vx_N)}]$ is finite on a neighbourhood of $\lambda = 0$ and analytic there.

Nothing beyond $\sigma$-finiteness and exponential integrability was used. In particular, $\cG$ need not be a probability measure, need not be Markov, and no finite-dimensional parameterization is involved.

\subsection{Endpoint measurability of the tangent space}

The log-likelihood ratio between two nearby members of the tilt family is, directly from Eq.~\eqref{eq:tilt_family},
\begin{equation}
	\ln\frac{\cH^{(u+\delta u,\,v+\delta v)}}{\cH^{(u,v)}}(\vx_{0:N})
	= \delta u(\vx_0) + \delta v(\vx_N) - \delta\Lambda,
\end{equation}
so the centred score is
\begin{equation} \label{eq:score_endpoint}
	\delta\ell = \delta u(\vx_0) + \delta v(\vx_N) - \E_\rho\big[\delta u(\vx_0)+\delta v(\vx_N)\big],
\end{equation}
a function of the endpoints alone. Every tangent vector of the bridge family is therefore represented by an endpoint-measurable score. The whole local statistical experiment, namely the collection of likelihood ratios distinguishing infinitesimally separated bridges, is measurable with respect to $\sigma(\vx_0,\vx_N)$. Lemma~\ref{lem:ambient_fisher} is the second moment of Eq.~\eqref{eq:score_endpoint}.

This is the precise sense in which passing from the path to the endpoint pair loses nothing relevant. By the monotonicity of Fisher information under Markov kernels, any reduction of the path variable can only decrease the Fisher form, with equality for a sufficient reduction \cite{Chentsov1982, AyJostLeSchwachhofer2017}. Equation~\eqref{eq:score_endpoint} exhibits the endpoint pair as sufficient for the tangent experiment, so the reduction is lossless at this order.

\subsection{Singularity of exact path conditioning}

Let $\cG$ be a nondegenerate diffusion reference and consider $\cH(\cdot\mid X_t=\vx)$. The event $\{X_t=\vx\}$ has probability zero, and the conditional measures for distinct $\vx$ are carried by the disjoint path sets $\{\omega: \omega(t)=\vx\}$. Hence for $\vx\neq\vx'$ the two measures are mutually singular and
\begin{equation}
	D_{KL}\big(\cH(\cdot\mid X_t=\vx)\,\big\|\,\cH(\cdot\mid X_t=\vx')\big) = +\infty .
\end{equation}
A Fisher metric requires $D_{KL}(P_{\theta+d\theta}\|P_\theta) = \tfrac12 g_{ij}d\theta^i d\theta^j + O(|d\theta|^3)$, but no such expansion exists here at any order. The obstruction is not specific to Schr\"odinger bridges but is generic for exact point conditioning of continuous-path measures, and it is the reason the reduction Eq.~\eqref{eq:endpoint_immersion} conditions the endpoint experiment rather than the path.

\subsection{Exponential-family form of the Gaussian corruption}

With $q(\vx,t\mid\vx_N) = \cN(\vx;\alpha_t\vx_N,\sigma_t^2\mathbf I)$,
\begin{equation}
	\ln q = -\frac{\sqnorm{\vx - \alpha_t\vx_N}}{2\sigma_t^2} + \text{const}
	= \frac{\alpha_t}{\sigma_t^2}\,\vx\cdot\vx_N \;-\; \frac{\alpha_t^2}{2\sigma_t^2}\sqnorm{\vx_N} \;-\; \frac{\sqnorm{\vx}}{2\sigma_t^2} + \text{const},
\end{equation}
which is Eq.~\eqref{eq:ref_exp_split} with
\begin{equation} \label{eq:app_A_explicit}
	\bs(\vx_N) = \Big(\vx_N, -\tfrac12\sqnorm{\vx_N}\Big),
	\qquad
	A(\vx,t) = \Big(\frac{\alpha_t}{\sigma_t^2}\vx,\ \frac{\alpha_t^2}{\sigma_t^2}\Big),
\end{equation}
and $B$, $C$ collecting the terms depending on $\vx$ alone and on $\vx_N$ alone. The coefficient of the second-moment statistic is the signal-to-noise ratio $\mathrm{SNR}_t = \alpha_t^2/\sigma_t^2$, so with $\tilde\vx = \vx/\alpha_t$,
\begin{equation}
	A(\vx,t) = \mathrm{SNR}_t\cdot(\tilde\vx,1),
\end{equation}
which is Eq.~\eqref{eq:cone}. At fixed $\tilde\vx$ the time derivative is $\partial_t A = (\partial_t \mathrm{SNR}_t)\,(\tilde\vx,1) \parallel A$, so the temporal direction of latent spacetime is radial in natural-parameter space and the spatial directions are angular.

For an \emph{anisotropic} Gaussian corruption with covariance $\Sigma_t$, the term $-\tfrac12\vx_N^\top\Sigma_t^{-1}\vx_N$ does not reduce to a multiple of $\sqnorm{\vx_N}$, and the sufficient statistic must carry the full symmetric tensor $-\tfrac12\vx_N\otimes\vx_N$. The natural-parameter space then has $m = d + d(d+1)/2$ dimensions while the latent manifold still has $d+1$. The image of $\iota_{\rm lat}$ therefore acquires codimension, the immersion is no longer onto an open set, and the latent family becomes a curved exponential family with nonvanishing embedding curvature $\Pi^{\perp}[\partial_\mu\partial_\nu A]$. Isotropy is exactly the condition $m = d+1$.

\subsection{Pullback and endpoint split}

The conditional endpoint law Eq.~\eqref{eq:p_g} is an exponential family with carrier $\nu$ and natural parameter $A(\vx,t)$, so Lemma~\ref{lem:ambient_fisher} applies to it verbatim and its Fisher form in natural coordinates is $\Cov_p[\bs_a,\bs_b]$. Composing with $\iota_{\rm lat}$ and using the chain rule gives Eq.~\eqref{eq:factorization},
\begin{equation}
	g_{\mu\nu} = \partial_\mu A^a\,\partial_\nu A^b\,\Cov_{p}[\bs_a,\bs_b].
\end{equation}

For the split, the bridge is Markov, so conditioning on $X_t=\vx$ makes past and future independent,
\begin{equation}
	p(\vx_0,\vx_N\mid X_t=\vx) = p(\vx_0\mid X_t=\vx)\,p(\vx_N\mid X_t=\vx).
\end{equation}
The score of the joint conditional with respect to $\vx$ is then the sum of the two individual scores, and because the two factors are independent under the conditional measure the cross-covariance vanishes, giving Eq.~\eqref{eq:tangent_chain} with
\begin{align}
	g^{\rm future}_{ij} &= \Cov_{p(\vx_N\mid\vx,t)}\big[\partial_i \ln q(\vx,t\mid\vx_N),\ \partial_j \ln q(\vx,t\mid\vx_N)\big], \\
	g^{\rm past}_{ij} &= \Cov_{p(\vx_0\mid\vx,t)}\big[\partial_i \ln G(\vx,t\mid\vx_0),\ \partial_j \ln G(\vx,t\mid\vx_0)\big].
\end{align}
No such cancellation occurs in Eq.~\eqref{eq:three_terms}, where the endpoints are not conditioned and their covariance is generically nonzero.

\subsection{Gaussian evaluation}

From Eq.~\eqref{eq:gaussian_encoder},
\begin{equation}
	\partial_i \ln q(\vx,t\mid\vx_N) = -\frac{x_i - \alpha_t x_{N,i}}{\sigma_t^2},
\end{equation}
whose only $\vx_N$-dependence is through $\alpha_t x_{N,i}/\sigma_t^2$. Hence
\begin{equation} \label{eq:app_gfuture}
	g^{\rm future}_{ij}(\vx,t) = \frac{\alpha_t^2}{\sigma_t^4}\,\Cov\big[x_N^i, x_N^j \mid X_t=\vx\big].
\end{equation}
To convert this into Hessian form, write the marginal as an integral over the data distribution, $P_t(\vx) = \int d\vx_N\, P_{\rm data}(\vx_N)\, q(\vx,t\mid\vx_N)$. Differentiating once,
\begin{equation}
	\partial_i \ln P_t(\vx) = \E\big[\partial_i \ln q \mid \vx\big] = \frac{\alpha_t \E[x_N^i\mid\vx] - x_i}{\sigma_t^2},
\end{equation}
which is the Tweedie relation \cite{Robbins1992, Efron2011TweediesFA}. Differentiating a second time and using $\partial_j \E[x_N^i\mid\vx] = (\alpha_t/\sigma_t^2)\Cov[x_N^i,x_N^j\mid\vx]$,
\begin{equation}
	\partial_i\partial_j \ln P_t(\vx) = -\frac{\delta_{ij}}{\sigma_t^2} + \frac{\alpha_t^2}{\sigma_t^4}\Cov\big[x_N^i,x_N^j\mid\vx\big].
\end{equation}
Combining with Eq.~\eqref{eq:app_gfuture} gives Eq.~\eqref{eq:g_chi},
\begin{equation}
	g^{\rm future}_{ij}(\vx,t) = \frac{1}{\sigma_t^2}\delta_{ij} + \partial_i\partial_j \ln P_t(\vx).
\end{equation}
Comparison with Theorem~\ref{thm:floor_fisher} is immediate. The theorem's metric is $\E[\partial_i\ln p(\vx_N\mid\vx,t)\,\partial_j\ln p(\vx_N\mid\vx,t)]$, and by Eq.~\eqref{eq:posterior_score} the posterior score is $\partial_i \ln q - \partial_i \ln P_t$. Its second moment under the posterior is the covariance of $\partial_i\ln q$, which is exactly $g^{\rm future}$.

\subsection{The radial information identity}

Let $p_\theta(\vx_N)\propto \nu(\vx_N)e^{\theta\cdot\bs}$ and $K(\theta) = D_{KL}(p_\theta\|\nu)$. Using $\Lambda(\theta) = \ln\int\nu\, e^{\theta\cdot\bs}$ and $\eta = \nabla\Lambda = \E_{p_\theta}[\bs]$,
\begin{equation}
	K(\theta) = \E_{p_\theta}\big[\theta\cdot\bs - \Lambda(\theta)\big] = \theta\cdot\eta(\theta) - \Lambda(\theta),
\end{equation}
so that
\begin{equation}
	\nabla_\theta K = \eta + \theta^\top\nabla^2\Lambda - \nabla\Lambda = g(\theta)\,\theta,
	\qquad g = \nabla^2\Lambda = \Cov_{p_\theta}[\bs].
\end{equation}
Restricting to the ray $\theta = r\,n$ with $n$ fixed and normalized,
\begin{equation}
	\frac{dK}{dr} = n\cdot\nabla_\theta K = r\, n^\top g\, n = r\,g_{rr},
\end{equation}
which is Eq.~\eqref{eq:radial_info}. Since by Eq.~\eqref{eq:cone} the radial coordinate is $\mathrm{SNR}_t$ and the angular coordinate is the rescaled latent, the corruption moves along the ray and $K$ decreases monotonically as $\mathrm{SNR}_t$ decreases, at a rate set by the radial component of the Fisher metric. At $r=0$ the conditional endpoint law coincides with the carrier and the latent retains nothing about the data.

\section{Proofs of the Information-Flow Identities}
\label{app:infoflow}

\renewcommand{\theequation}{\thesection.\arabic{equation}}
\setcounter{equation}{0}

\begin{proof}[Proof of Lemma~\ref{lem:fisher_gap}]
	By Eq.~\eqref{eq:posterior_score} and the centering identity, for fixed $\vx$
	\begin{align*}
		\E_{\vy\mid\vx}\big\|\nabla_{\!\vx}\ln p(\vy\mid\vx,t)\big\|^2
		&= \E_{\vy\mid\vx}\big\|\nabla_{\!\vx}\ln q(\vx,t\mid\vy)\big\|^2
		- 2\,\nabla\!\ln P_t\cdot \E_{\vy\mid\vx}\big[\nabla_{\!\vx}\ln q\big] \\
		&\quad + \big\|\nabla\!\ln P_t\big\|^2 \\
		&= \E_{\vy\mid\vx}\big\|\nabla_{\!\vx}\ln q(\vx,t\mid\vy)\big\|^2 - \big\|\nabla\!\ln P_t(\vx)\big\|^2 .
	\end{align*}
	Averaging over $\vx\sim P_t$ and using the tower property,
	$\E_{P_t}\E_{\vy\mid\vx}\|\nabla\ln q\|^2 = \E_{\vy}\E_{\vx\mid\vy}\|\nabla\ln q(\cdot\mid\vy)\|^2 = \E_\vy J(q_t(\cdot\mid\vy))$.
\end{proof}

\begin{proof}[Proof of Lemma~\ref{lem:debruijn}]
	Any solution of the Fokker--Planck equation $\partial_t p = -\nabla\!\cdot\!(f p) + \tfrac{D}{2}\nabla^2 p$ satisfies $\tfrac{d}{dt}h(p_t) = \E_{p_t}[\nabla\!\cdot\! f] + \tfrac{D}{2}J(p_t)$, by two integrations by parts. Apply this to $P_t$ and to $q_t(\cdot\mid\vy)$ for each $\vy$, and average the latter over $\vy$. The drift terms then agree by the tower property and cancel in $I(\vy;\vx_t) = h(\vx_t) - h(\vx_t\mid\vy)$. Eq.~\eqref{eq:fisher_gap} converts the result into the metric trace.
\end{proof}

\begin{proof}[Proof of Theorem~\ref{thm:floor_information}]
	Theorem~\ref{thm:floor_fisher} with the Gaussian evaluation Eq.~\eqref{eq:g_chi} gives the integrand $\gamma(t)(\alpha_t^2/\sigma_t^4)\operatorname{tr}\Cov[\vy\mid\vx]$. By Lemma~\ref{lem:schedule} the prefactor is $d\lambda/dt$, so the $t$-integral becomes $\frac12\int \mathrm{MMSE}\,d\lambda$. The second equality is the I--MMSE relation of Guo, Shamai and Verd\'u \cite{GuoShamaiVerdu2005}, $dI(\vy;\vx_\lambda)/d\lambda = \tfrac12\mathrm{MMSE}(\lambda)$. The third holds because $h(\vx_\lambda\mid\vy) = h(\vz)$ is independent of $\lambda$, so $I = h(\vx_\lambda) - h(\vz)$ and the constant cancels in the difference.
\end{proof}

\section{Numerical and Experimental Details}
\label{app:numerics}

\renewcommand{\theequation}{\thesection.\arabic{equation}}
\setcounter{equation}{0}

This appendix collects the numerical verifications quoted in the main text. All experiments are analytic or low-dimensional by design, so that every quantity can be computed exactly or by quadrature. They establish identities rather than performance.

\subsection{de Bruijn identity for a nonlinear drift}
For the nonlinear corruption drift $f(x) = -a x^3$, we solved the Fokker--Planck equation directly and evaluated both sides of Eq.~\eqref{eq:debruijn}. The two sides agree with a median relative error of $1.7\times10^{-3}$ point-wise in time, and the integrated form agrees to $0.4\%$.

\subsection{The schedule identity}
For the linear interpolant $\alpha_t=t$, $\sigma_t=1-t$, the bridge coefficient of Eq.~\eqref{eq:bridge_coefficient} is $\gamma(t) = 2\sigma_t^2\,\frac{d}{dt}\ln(\alpha_t/\sigma_t) = 2(1-t)/t$, so the left-hand side of Eq.~\eqref{eq:schedule_identity} is $2\frac{1-t}{t}\frac{t^2}{(1-t)^4} = \frac{2t}{(1-t)^3}$, which equals $\frac{d}{dt}\frac{t^2}{(1-t)^2} = \frac{d\lambda_t}{dt}$. The trigonometric and polynomial interpolants follow by the same computation, and we have verified Eq.~\eqref{eq:schedule_identity} numerically to a relative error of $3\times10^{-6}$ for four distinct interpolants.

\subsection{The floor--entropy identity}
Figure~\ref{fig:floor} evaluates Theorem~\ref{thm:floor_information} on a two-mode Gaussian mixture for which the differential entropy $h(\vx_\lambda)$ is computable by quadrature. The accumulated floor $\frac12\int\mathrm{MMSE}\,d\lambda$ and the entropy change $\Delta h$ agree to a relative error of $2\times10^{-5}$.

\subsection{High-SNR slopes}
Figure~\ref{fig:dim} fits the divergence rate of the floor for Gaussian data of rank $k$ embedded in $\R^{10}$. The fitted slopes match $k/2$ to three decimals, confirming Eq.~\eqref{eq:mmse_gauss}.

\subsection{Ranking inversion}
The example of Sec.~\ref{subsec:losscompare} uses the analytic two-mode mixture. The two SNR ranges are $A$ (narrow) and $B$ (wide). The two models are exact scores scaled by $c = 0.97$ (better) and $c = 0.90$ (worse). Raw losses are $1.0490$ and $1.1267$ on $A$ and $1.5455$ and $1.8601$ on $B$. The floors, computed from Theorem~\ref{thm:floor_fisher} as expectations of the posterior covariance, are $1.0414$ and $1.5143$. The floor-subtracted excesses are $0.0077 < 0.0854$ on $A$ and $0.0311 < 0.3458$ on $B$.

\subsection{Third-cumulant localization}
The measurements of Sec.~\ref{subsec:third_order} evaluate both terms of $d^2\eta/d\lambda^2$ in Eq.~\eqref{eq:cumulant_expansion} along exactly integrated probability-flow trajectories on the same mixture. The share of $T[u,u]$ in $\|d^2\eta/d\lambda^2\|$ is $0.05$ at $t=0.10$, $0.85$ at $t=0.30$, $1.19$ at $t=0.45$ (values above one indicating partial cancellation between the two terms), and $0.11$ at $t=0.65$, with a median of $0.47$ across trajectories. The step-placement comparison of Fig.~\ref{fig:consequences} uses an explicit first-order Euler solver, with endpoint error measured against the exactly integrated probability-flow ODE.

\subsection{The discrete floor}
For $n=3$ tokens over an alphabet of size $3$ with total correlation $0.46$ nats, Eq.~\eqref{eq:discrete_floor} reproduces the data entropy $H(y_{1:n}) = 2.58757$ to a relative error of $8\times10^{-8}$. Integrating in $t$ against the weight $-\dot m_t/m_t$ returns the same value for the masking schedules $m = 1-t$, $\cos(\pi t/2)$, $(1-t)^3$, and $\sqrt{1-t}$, confirming schedule independence.

\end{document}